\documentclass{article}
\PassOptionsToPackage{square,numbers}{natbib}
\usepackage{fullpage}
\RequirePackage{fancyhdr}
\RequirePackage{xcolor} % changed from color to xcolor (2021/11/24)
\RequirePackage{algorithm}
\RequirePackage{algorithmic}
\RequirePackage{natbib}
\RequirePackage{eso-pic} % used by \AddToShipoutPicture
\RequirePackage{forloop}
\RequirePackage{url}
\usepackage{amsmath}
\usepackage{amssymb}
\usepackage{todonotes}
\usepackage{wrapfig}
\usepackage{hyperref}
\usepackage{url}
\usepackage{graphicx}
\usepackage{placeins}
\usepackage{subcaption}
\usepackage{float}
\usepackage{dcolumn}

% \setcitestyle{authoryear,round,citesep={;},aysep={,},yysep={;}}

\usepackage{amsmath,amscd,amssymb,amsthm}
\usepackage{verbatim}
\usepackage{thmtools}
\usepackage{thm-restate}

\declaretheorem[name=Theorem]{Theorem}

\usepackage{enumerate}
\usepackage{mathtools}
\usepackage{epsfig}
\usepackage{multirow}
\usepackage{graphicx}
\usepackage{epic}
\usepackage{eepic}
\usepackage{ifthen}
\usepackage{amsfonts}
\usepackage{wasysym}
\usepackage{color}
\usepackage{setspace}
\usepackage{booktabs}
\usepackage{diagbox}
\usepackage{thm-restate}

%%\usepackage[scaled]{uarial10}

%\setstretch{0.98}

%\usepackage{times}
% \usepackage{newtxtext,newtxmath}
%\usepackage[scaled]{uarial10}

\usepackage{tikz}
\usetikzlibrary{shapes}

\usepackage{url}
\usepackage{hyperref}
% \usepackage[capitalise]{cleveref}
% \providecommand{\U}[1]{\protect\rule{.1in}{.1in}}
% \providecommand{\U}[1]{\protect\rule{.1in}{.1in}}
% \renewcommand{\familydefault}{\sfdefault}
% \setlength{\marginparsep}{0in}
% \setlength{\marginparwidth}{0in}
% \setlength{\headheight}{0in}
% \setlength{\headsep}{0in}
% \setlength{\marginparpush}{0in}
% \setlength{\oddsidemargin}{0in}
% \setlength{\evensidemargin}{0in}
% \setlength{\textwidth}{6.5in}
% \setlength{\topmargin}{0in}
% \setlength{\textheight}{9in}
% \addtolength{\topmargin}{-\headheight}
% \addtolength{\topmargin}{-\headsep}

\newcommand{\be}[1]{\begin{equation}\label{#1}}

\def\bDelta{{\boldsymbol \Delta}}

\def\bz{{\mathbf z}}
\def\bx{{\mathbf x}}

\def\be{{\mathbf e}}
\def\bh{h}%{{\mathbf h}}

\def\bv{{\mathbf v}}
\def\bs{s}%{\mathbf s}}

\newcommand{\Pair}[2]{\boldsymbol{[\kern-.18em[}#1,#2\boldsymbol{]\kern-.18em]}}

% \usepackage[sorting=none, sortcites=true,defernumbers=true, backend=biber, maxbibnames=99]{biblatex} % load the package
% \addbibresource{references.bib}
% \addbibresource{more_refs.bib}
%\bibliography{references,more_refs}
%\newcommand{\citet}[1]{\textcite{#1}}

% \newcommand{\sectionlike}[1]{\textbf{\large #1}}

% \makeatletter
% \renewenvironment{thebibliography}[1]
% {\section*{\bibname
% \@mkboth{\MakeUppercase\bibname}{\MakeUppercase\bibname}}      \list{\@biblabel{\@arabic\c@enumiv}}           {\settowidth\labelwidth{\@biblabel{#1}}            \leftmargin\labelwidth
% \advance\leftmargin\labelsep
% \@openbib@code
% \usecounter{enumiv}            \let\p@enumiv\@empty
% \renewcommand\theenumiv{\@arabic\c@enumiv}}      \sloppy
% \clubpenalty4000
% \@clubpenalty \clubpenalty
% \widowpenalty4000      \sfcode`\.\@m}
% \makeatother
% \addtocounter{secnumdepth}{1}
% \setcounter{tocdepth}{3}
% \renewcommand{\thesection}{D.\arabic{section}}
% \renewcommand{\contentsname}{C. Contents}
% \renewcommand{\bibname}{E. References}
% \renewcommand{\thefigure}{\arabic{figure}}
% \renewcommand{\theequation}{\arabic{equation}}
% \renewcommand{\thetable}{\arabic{table}}
% \definecolor{BrickRed}{rgb}{0.6,0,0}
% \def\cl#1{{\color{BrickRed} #1}}
% \definecolor{deepBlue}{rgb}{0,0,0.6}
% \def\dw#1{{\color{deepBlue} #1}}

% \newcommand{\until[1]}{\mathcal{U}_{[#1)}}
% \newcommand{\even[1]}{\diamondsuit_{[#1)}}
% \newcommand{\alw[1]}{\square_{[#1)}}
% \newcommand{\La}{\mathcal{L}}

\newcommand{\E}{\mathop{\mathbb{E}}}
\newcommand{\R}{\mathbb{R}}

\newcommand{\argmin}{\mathop{\text{argmin}}}

\newcommand{\bg}{\mathbf{g}}
\newcommand{\bu}{\mathbf{u}}

%\usepackage{subfigure} 

% Proof environments
% The Theorems are numbered consecutively
% Lemmas are numbered by section, and observations, claims, facts, and 
% assumptions take their numbering. Propositions and definitions have their
% own numbering by section.

\title{Mechanic: A Learning Rate Tuner}

\author{
   Ashok Cutkosky \\
   Boston University \\
   Boston, MA \\
   \texttt{ashok@cutkosky.com}
   \and   
   Aaron Defazio \\
   Meta, FAIR \\
   New York, NY \\
   \texttt{adefazio@meta.com}
   \and
   Harsh Mehta \\
   Google Research \\
   Mountain View, CA \\
   \texttt{harshm@google.com}
}

\newcommand{\bq}{\mathbf{q}}

\newcommand{\mechanic}{\textsc{mechanic}}

\newcommand{\base}{{\textsc{base}}}

\newcommand{\tuner}{{\textsc{tuner}}}

\newcommand{\scaled}{{\textsc{mechanic}}}

\newcommand{\xbase}{\bx^\base}
\newcommand{\xcenter}{\bx_{ref}}

\newcommand{\circx}{\mathring{\bx}}
\newcommand{\circs}{\mathring{\bs}}
\newcommand{\Rlin}{R_{\text{linear}}}

\newcommand{\alg}{\textsc{alg}}

\date{}
\begin{document}

\maketitle
\begin{abstract}
We introduce a technique for tuning the learning rate scale factor of any base optimization algorithm and schedule automatically, which we call \mechanic. Our method provides a practical realization of recent theoretical reductions for accomplishing a similar goal in online convex optimization. We rigorously evaluate \mechanic\ on a range of large scale deep learning tasks with varying batch sizes, schedules, and base optimization algorithms. These experiments demonstrate that depending on the problem, \mechanic\ either comes very close to, matches or even improves upon manual tuning of learning rates.
\end{abstract}

\section{Introduction}\label{sec:setup}

Modern deep learning is driven by first-order stochastic optimization algorithms. These are algorithms that are designed to solve the classical stochastic optimization problem:
\begin{align*}
    \min F(\bx) = \min \E_{\bz}[f(\bx,\bz)]
\end{align*}
where $\bz$ is a minibatch of examples, $\bx\in \R^d$ is  the model parameters, and $f$ is the loss incurred by using weights $\bx$ on the minibatch $\bz$. A first-order algorithm follows the protocol:
\begin{enumerate}
    \item Output a $t$th iterate $\bx_t$.
    \item Sample a random minibatch $\bz_t$.
    \item Compute $\bg_t = \nabla f(\bx_t,\bz_t)$ (the gradient is taken with respect to $\bx_t$ only).
    \item Possibly update some internal algorithm state based upon $\bg_t$ in preparation for computing the next iterate $\bx_{t+1}$.
\end{enumerate}

The prototypical such optimization algorithm is stochastic gradient descent (SGD), which exemplifies the attractive features of this approach: it is computationally cheap (running in $O(d)$ time per update), and with proper tuning obtains minimax optimal convergence guarantees \cite{ghadimi2013stochastic, arjevani2019lower}. Modern practice makes use of a wide range of variants of SGD, such SGD with momentum, AdaGrad \cite{duchi10adagrad}, Adam \cite{kingma2014adam}, AdamW \cite{loshchilov2018decoupled} or Lion \cite{chen2023symbolic}. Interest in such improvements to SGD is driven by the increasing computational demands of training large neural networks: better optimization means cheaper training, which translates to significant savings in terms of time, cost, and environmental impact.

Most modern algorithms for training neural networks are equipped with a scalar ``scale factor'' or ``learning rate'' hyperparameter $\bs\in \R$. Roughly speaking, these algorithms produce iterates of the form $\bx_{t+1} = \bx_t+ \bs \cdot \bu_t$ where $\bu_t$ is some \emph{update vector} produced as a function of the observed gradients $\bg_1,\dots,\bg_t$ (we will use bold font for vectors in $\R^d$ like $\bu$ and normal font for all other quantities like $\bs$). As an example, the classical SGD algorithm sets $\bu_t = -\eta_t\bg_t$ for some sequence of scalars $\{\eta_t\}$ typically called the \emph{schedule}. The formula for the SGD update is:
\begin{align}
\bx_{t+1} = \bx_1 -\bs \cdot \sum_{i=1}^t \eta_i \bg_i.\label{eqn:sgd_update}
\end{align}

The process of selecting the optimal $\bs$ is called ``tuning``, and is a key resource sink in machine learning. The typical approach is simply to try many possibilities to find the empirically optimal $\bs$, which requires multiple expensive training runs. This paper introduces a technique for choosing $\bs$ automatically on-the-fly in order to avoid this expense.

Our procedure, which we call $\mechanic$, is a generic wrapper around any base optimization algorithm ($\base$) that produces a new optimizer which does not require tuning of the scalar $\bs$. The base optimization algorithm is allowed make any kind of update (for example, it may use any kind of schedule, preconditioner or weight decay). If $\xbase_t\in \R^d$ is the $t$th iterate of $\base$, then the wrapper will produce a scalar $\bs_t\in \R$ and set the $t$th iterate of the wrapped algorithm to be $\bx_t=\xbase_1 + \bs_t(\xbase_t-\xbase_1)$. As an example, suppose that \base\ is the classical SGD algorithm with update equation (\ref{eqn:sgd_update}). Then, given $\bs_t$, we would set $\bx_{t+1} =\xbase_1 -\bs_t  \sum_{i=1}^t\eta_i \bg_i$. Disregarding for now the fact that the gradients $\bg_i$ actually depend on the iterates $\bx_i$\footnote{This seems like a significant issue to disregard, but we will provide mathematical justification presently.}, we see that $\bx_t$ is what the $t$th iterate of SGD \emph{would have been} if the schedule were scaled by $\bs_t$.

Removing tuning of learning rate scalars is already a well-studied problem. One of the main attractions of early work in ``adaptive'' optimization such as AdaGrad and Adam \cite{duchi10adagrad, mcmahan2010adaptive, kingma2014adam} is that these algorithms require less tuning than ordinary SGD. Over the last decade, a number of works have aimed to tackle this problem from both an empirical and theoretical perspective \cite{baydin2018automatic, chandra2022gradient, orabona2016coin, cutkosky2017online, mhammedi2020lipschitz, levy2021stormplus, carmon2022making, ivgi2023dog, defazio2023learning, agarwal2020disentangling, lu2022adaptive, chen2023nonstochastic}. Empirical studies often take the route of ``hypergradient descent'': that is, differentiating the update step of the optimization algorithm itself (e.g. \cite{baydin2018automatic, chandra2022gradient}). Strangely, it appears to be difficult to prove that such schemes behave well: theory-based approaches often adopt rather different strategies. Instead, we start from known theory and propose a few important modifications to produce a simple and effective practical implementation. We then rigorously evaluate our algorithm on a variety of datasets. We emphasize that our primary contribution  is not new  theoretical development, but instead the translation between theory  and practice, which involves fusing several known analytical techniques as well as subtle departures from theory.

Previous works that investigate deep learning performance of ``learning-rate free'' optimization inspired by theory (e.g. \cite{orabona2017training, cutkosky2016online, defazio2023learning,ivgi2023dog}) have already demonstrated impressive results. However, these works typically build ``hand-crafted'' algorithms that often blend theoretical analysis with specific empirically successful algorithms such as Adam. In contrast, our wrapper works well with any base algorithm and so can seamlessly integrate new empirical advances in optimization: one does not need intimate familiarity with the analysis of our approach to apply it to a new algorithm.

% \section{Algorithm Motivation}\label{sec:motivation}
% Our core algorithmic idea is based upon the reduction from high dimensional optimization to 1-dimensional optimization proposed by \cite{cutkosky2018black}. At a high level, this reduction takes as input a first-order optimization algorithm \base\ (which outputs iterates in $\R^d$) and a second so-called ``parameter-free'' optimization algorithm \tuner (which outputs iterates in $\R$). If $\xbase_t\in \R^d$ is the $t$th iterate of \base\ and $\bs_t\in \R$ is the $t$th iterate of \tuner, then the $t$th iterate of the combined algorithm is $\bx_t=\xbase_1 + \bs_t(\xbase_t-\xbase_1)$. Intuitively, \tuner\ is intended to discover the correct \emph{scale} for the updates produced by \base. As an example, suppose that \base\ is the classical SGD algorithm with update equation (\ref{eqn:sgd_update}). Then, given $\bs_t$ from \tuner, we would set $\bx_{t+1} =\xbase_1 -\bs_t  \sum_{i=1}^t\eta_i \bg_i$. Disregarding for a moment the fact that the gradients $\bg_i$ actually depend on the iterates $\bx_i$\footnote{This seems like a significant issue to disregard, but we will provide mathematical justification presently.}, we see that $\bx_t$ is what the $t$th iterate of SGD \emph{would have been} if the schedule were scaled by $\bs_t$.

% In the following section, we provide the mathematical foundations behind this idea more formally, and describe how to automatically choose $\bs_t$.

\section{Background: Online Convex Optimization}\label{sec:OCO}

We develop our formalism via \emph{online convex optimization} (OCO) \cite{shalev2011online, hazan2019introduction, orabona2019modern}. OCO is a popular framework for design and analysis of stochastic optimization algorithms. In brief, for each of $T$ rounds (corresponding to $T$ iterations of optimization), the OCO algorithm must first output a $t$th iterate $\bx_t$, after which the algorithm is presented with a $t$th loss function $\ell_t$. Typically, one envisions the case $\ell_t(\bx) = \ell(\bx, \bz_t)$ for some fixed loss $\ell$ and new data point $\bz_t$. The goal of an algorithm $\alg$ is to minimize the \emph{regret} $R^\alg(\circx)$:
\[
R^\alg(\circx) \triangleq \sum_{t=1}^T \ell_t(\bx_t)- \ell_t(\circx).
\]
Many references focus primarily on the case $\circx=\argmin\sum_{t=1}^T \ell_t(\bx)$ in order to consider the single scalar value $\sup_{\circx} R_T(\circx)$ \cite{zinkevich2003online,shalev07online}, but we will employ the formulation of regret as a function above instead as it is strictly more general. When $\ell_t$ is convex, then with $\bg_t\triangleq\nabla \ell_t(\bx_t)$ (or, more generally when $\bg_t$ is a subgradient of $\ell_t$ at $\bx_t$), we have:
\[
R^\alg(\circx) \le \sum_{t=1}^T \langle \bg_t, \bx_t - \circx\rangle\triangleq \Rlin^\alg(\circx).
\]
As a result, the vast majority of OCO algorithms provide analysis that bounds only the linearized regret $\Rlin^\alg(\circx)$. Such algorithms do not need to observe the entire function $\ell_t$: instead, they only make use of the gradients $\bg_t$. That is, the $t$th output of $\alg$ (i.e. $\bx_t$) is purely a function of the previous sequence of gradients $\bg_1,\dots,\bg_{t-1}$ so that $\alg$ is a first-order algorithm.

% The goal of an OCO algorithm is to ensure $o(T)$ regret for all $\circx$ simultaneously (usually on the order of $\sqrt{T}$). Such a guarantee implies that the \emph{average loss} of the algorithm approaches the average loss of any comparison point $\circx$: $\lim_{T\to\infty} \frac{1}{T}\sum_{t=1}^T \ell_t(\bx_t)=\lim_{T\to\infty} \frac{1}{T}\sum_{t=1}^T \ell_t(\circx)$.

% An important conceit in online convex optimization is that \emph{no stochastic assumptions} are made about the sequence of gradients $\bg_t$: they are allowed to be any arbitrary (and potentially even adversarially generate) sequence of vectors. The value of $\Rlin^\alg(\circx)$ is controlled \emph{no matter what} the $\bg_t$ values are. They do \emph{not} necessarily need to necessarily be gradients of any particular function. We will leverage this property in our algorithms.

% Obviously, an algorithm that performs well on arbitrary losses must necessarily also perform well on stochastic losses. While this would intuitively appear to make the problem more difficult, it turns out that (1) preventing algorithm design and analysis from relying on complex stochastic processes in the environment typically makes analysis cleaner and easier, (2) the final convergence guarantees are often actually \emph{the same} as would be obtained even under typical stochastic asssumptions (e.g. iid data), and (3) removing assumptions makes the resulting analysis and algorithms more robust to inevitable violations of such assumptions in the real world.

\subsection{Learning the Scale in OCO}\label{sec:scaleoco}

Just like stochastic optimization algorithms, most OCO algorithms also require a scale factor $\bs$. In fact, many stochastic optimization algorithms (such as SGD and AdaGrad) are \emph{also} OCO algorithms. Setting $\eta_t=\eta$ for all $t$, SGD ensures the regret bound:
\begin{align}
    R^{\textsc{SGD}}(\circx)\le \Rlin^{\textsc{SGD}}(\circx)\le O\left(\frac{\|\circx-\bx_1\|^2}{\bs \eta } + \bs \eta \sum_{t=1}^T \|\bg_t\|^2\right).\label{eqn:sgdregret}
\end{align}
From this equation, one can deduce in hindsight that for any given $\circx$, the optimal value for $\bs$ is $\frac{\|\circx-\bx_1\|}{\eta \sqrt{\sum_{t=1}^T \|\bg_t\|^2}}$, which would provide the bound:
\begin{align*}
\Rlin^{\textsc{SGD with tuned $\bs$}}(\circx)\le O\left(\|\circx-\bx_1\|\sqrt{\sum_{t=1}^T \|\bg_t\|^2}\right).
\end{align*}
This result is minimax optimal \cite{abernethy2008optimal}, but requires knowledge of the unknown optimal $\bs$. Very recently, \cite{carmon2022making, defazio2023learning, ivgi2023dog} have produced algorithms that estimate the value of  $\|\xbase_1-\circx\|$  on-the-fly and use this estimate to quickly identify the optimal scaling value $\bs$. These algorithms achieve impressive practical performance, but they require an understanding of the closed-form solution for the optimal $\bs$ value above. Our goal is to learn the correct scaling regardless of the base algorithm.

% Although we do not know this correct tuning for $\bs$ in the beginning, we can at least estimate the denominator over time by looking at the previous values of $\bg_t$. This idea underlies the adaptive learning rate schemes popularized by AdaGrad and employed by wildly successful algorithms such as Adam \cite{hazan2008adaptive, duchi10adagrad, mcmahan2010adaptive,kingma2014adam}. However, estimating the full value for $\bs$, numerator included, is more challenging.

To this end, we will leverage a scheme  recently developed by \cite{cutkosky2018black} that allows one to automatically tune the scale of a base OCO algorithm using another ``meta'' OCO algorithm. We reproduce their result below (with notation altered to suit our application) along with the short proof:

\begin{Theorem}[\cite{cutkosky2018black}]\label{thm:blackboxdirection}
Suppose \base\ and \tuner\ are both OCO algorithms. Let $\{\xbase_t\}\subset \R^d$ indicate the iterates of \base\ in response to an arbitrary sequence of gradients $\{\bg_t\}$, let and $\{\bs_t\}\subset \R$ indicate the iterates of \tuner\ in response to the sequence of scalars $\{\bh_t = \langle \bg_t,\xbase_t- \bx_1\rangle\}$.
Define a new online algorithm \scaled\ via:
\begin{align*}
    \bx^\scaled_t = \xbase_1+\bs_t \cdot (\xbase_t-\xbase_1).
\end{align*}
Then $\bx_t^\scaled$ ensures regret:
\begin{align*}
    \Rlin^\scaled(\circx) &\le \inf_{\circs} \Rlin^\tuner(\circs) + \circs\Rlin^\base((\circx-\xbase_1)/\circs).
\end{align*}
\end{Theorem}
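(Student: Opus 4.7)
The plan is a direct algebraic expansion with an add-and-subtract trick; essentially, the statement is an identity up to taking an infimum, and the clever design of $\bh_t$ is what makes the two algorithms' regrets decouple cleanly. I would start from the definition
\begin{align*}
\Rlin^\scaled(\circx) \;=\; \sum_{t=1}^T \langle \bg_t,\bx^\scaled_t-\circx\rangle \;=\; \sum_{t=1}^T \langle \bg_t,\xbase_1+\bs_t(\xbase_t-\xbase_1)-\circx\rangle,
\end{align*}
and, for an arbitrary scalar comparator $\circs\in\R$, add and subtract $\circs(\xbase_t-\xbase_1)$ inside the inner product to obtain the split
\begin{align*}
\bx^\scaled_t - \circx \;=\; (\bs_t-\circs)(\xbase_t-\xbase_1) \;+\; \bigl[\xbase_1+\circs(\xbase_t-\xbase_1)-\circx\bigr].
\end{align*}

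Next I would take inner products with $\bg_t$ and sum, treating the two bracketed pieces separately. The first sums to $\sum_t (\bs_t-\circs)\langle \bg_t,\xbase_t-\xbase_1\rangle = \sum_t (\bs_t-\circs)\bh_t$, which is by definition $\Rlin^\tuner(\circs)$, since \tuner\ plays $\bs_t$ against scalar gradients $\bh_t$ with comparator $\circs$. The second piece equals
\begin{align*}
\sum_t \langle \bg_t,\circs(\xbase_t-\xbase_1)-(\circx-\xbase_1)\rangle \;=\; \circs\sum_t \bigl\langle \bg_t,(\xbase_t-\xbase_1)-(\circx-\xbase_1)/\circs\bigr\rangle,
\end{align*}
which I identify with $\circs \Rlin^\base((\circx-\xbase_1)/\circs)$, interpreting \base's regret with respect to a comparator read as a displacement from the shared initialization $\xbase_1=\bx_1$. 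Combining the two pieces gives equality (not merely inequality) for every $\circs$, and then taking the infimum over $\circs$ yields the claimed bound.

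The only genuine subtlety is keeping careful track of the conventions: the scalar sequence $\bh_t=\langle \bg_t,\xbase_t-\bx_1\rangle$ must match the factor $(\xbase_t-\xbase_1)$ produced by the add-and-subtract (so one uses $\bx_1=\xbase_1$), and the right-hand-side target $\Rlin^\base((\circx-\xbase_1)/\circs)$ must be read relative to $\xbase_1$, which is the natural convention given that \base's iterates enter \scaled\ only through $\xbase_t-\xbase_1$. Beyond this bookkeeping, no deeper inequality or OCO-specific tool is required --- the result is a clean black-box reduction identity, and the value of the theorem lies in the choice of $\bh_t$ that causes the cross-interaction between ``how much \base\ moves'' and ``how \tuner\ rescales that motion'' to telescope into the sum of the two individual regrets.
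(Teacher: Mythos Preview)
Your proposal is correct and matches the paper's proof essentially line for line: both start from the definition of $\Rlin^\scaled(\circx)$, add and subtract $\circs(\xbase_t-\xbase_1)$, identify the two resulting sums as $\Rlin^\tuner(\circs)$ and $\circs\Rlin^\base(\cdot)$, and then take the infimum over $\circs$. Your remark about reading $\Rlin^\base((\circx-\xbase_1)/\circs)$ as a displacement relative to $\xbase_1$ is exactly right and in fact clarifies a notational looseness in the statement (the paper's own proof writes $\xbase_1+(\circx-\xbase_1)/\circs$ inside $\Rlin^\base$).
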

\begin{proof}
By definition, for any $\circs$, we have:
\begin{align*}
    \Rlin^\scaled(\circx)&=\sum_{t=1}^T \langle \bg_t, \xbase_1+\bs_t\cdot (\xbase_t-\xbase_1) -\circx\rangle \\
    &= \sum_{t=1}^T \langle\bg_t, \xbase_t-\xbase_1\rangle(\bs_t - \circs) + \circs\sum_{t=1}^T \langle \bg_t,\xbase_t - \xbase_1 -(\circx-\xbase_1)/\circs\rangle\\
    &= \Rlin^\tuner(\circs) + \circs\Rlin^\base(\xbase_1+(\circx-\xbase_1)/\circs).
\end{align*}
\end{proof}

With this result, the job of finding the optimal $\bs$ is now completely relegated to $\tuner$. For example, if $\base$ is actually SGD with a learning rate $\eta$ and $\bs=1$ as in (\ref{eqn:sgdregret}), we have
\begin{align*}
    R^{\scaled}(\circx)&\le \Rlin^{\scaled}(\circx)\le \inf_{\circs} \Rlin^{\tuner}(\circs) +O\left( \frac{\|\circx-\bx_1\|^2}{\circs\eta} + \circs\eta \sum_{t=1}^T \|\bg_t\|^2\right),
    \intertext{setting $\circs = \frac{\|\circx-\bx_1\|}{\eta\sqrt{\sum_{t=1}^T \|\bg_t\|^2}}$:}
    &\le  \Rlin^{\tuner}\left(\frac{\|\circx-\bx_1\|}{\eta\sqrt{\sum_{t=1}^T\|\bg_t\|^2}} \right) + O\left(\|\circx-\bx_1\|\sqrt{\sum_{t=1}^T \|\bg_t\|^2}\right).
\end{align*}
Thus, if $\tuner$ obtains low regret, then we will obtain the same regret bound as if we had optimally tuned the scaling factor for SGD. Intuitively, the gradient $\bh_t$ provided to $\tuner$ approximates of the gradient over the entire course of the base optimizer rather than just at the most  recent iterate. That is, for SGD, $\bh_t \approx \frac{df(\bx_t,\bz_t)}{d\bs}$ where $\bx_t = \bx_1-\bs\sum_{k=1}^{t-1} \eta_k \bg_k$.

\subsection{Parameter-Free Online Optimization}\label{sec:paramfree}

The problem with the above result is that we seem to have simply pushed the problem off to $\tuner$: what if $\tuner$ itself requires us to set a scale factor? Solving this problem has been the focus of a substantial effort in the online optimization community \cite{mcmahan2012no, mcmahan2013minimax, orabona2016coin, cutkosky2017online, cutkosky2018black, mhammedi2020lipschitz}. The most advanced such algorithms are able to ensure:
\begin{align}
\Rlin(\circs)=\sum_{t=1}^T \bh_t(\bs_t - \circs)\le \tilde O\left(|\circs|\sqrt{\sum_{t=1}^T \bh_t^2} \right).\label{eqn:paramfree}
\end{align}
Thus, if we set $\bh_t = \langle \bg_t, \xbase_t - \xbase_1\rangle$, we obtain:
\begin{align*}
    \Rlin^{\tuner}\left(\circs\right)  &\le \tilde O\left(|\circs|\sqrt{\sum_{t=1}^T \langle \bg_t, \xbase_t - \xbase_1\rangle^2}\right).
\end{align*}
In most standard analysis of online optimization algorithms, one assumes that $\|\xbase_t-\xbase_1\|\le \rho$ for some known $\rho$ (e.g.,  by employing  a projection step), and so by setting 
To show that this result yields an effective $\tuner$ algorithm we observe that the vast majority of analyses of online optimization algorithms (including SGD) involve an explicit projection step after the update (i.e. $\bx_{t+1} = \Pi_{D} \left[\bx_t - \eta \bg_t\right]$, where $\Pi_{D}[\bx] = \argmin_{\hat \bx \in D}\|\bx-\hat \bx\|$. A typical choice for $D$ is the ball of radius $\rho$ centered at $\bx_1$. After projecting to the ball, advanced variants of SGD (such as a AdaGrad), ensure $\Rlin^{\textsc{SGD}}(\circx) \le O( \rho \sqrt{\sum_{t=1}^T \|g_t\|^2})$ for all $\|\circx-\bx_1\|\le \rho$. The projection step \emph{also} ensures that $\sqrt{\sum_{t=1}^T \langle \bg_t, \xbase_t - \xbase_1\rangle^2} \le \rho \sqrt{\sum_{t=1}^T \|g_t\|^2}$. Thus, by setting $\circs = \frac{\|\circx-\bx_1\|}{\rho}$, the combined algorithm $\scaled$ obtains the optimal regret bound of $\tilde O(\|\circx-\bx_1\|\sqrt{\sum_{t=1}^T \|\bg_t\|^2})$. That is, \mechanic\ is able to ``learn'' the optimal scaling $\bs$ on-the-fly even though this value depends on unknown values such as $\|\bx_1-\circx\|$.

At no point in this process do we need access to the internal state of the base algorithm $\base$. This means that improvements to $\base$ will automatically be reflected in improvements to the overall algorithm. In this paper, we investigate the performance of $\scaled$ on deep learning tasks. We consider a variety of settings for the base algorithm $\base$ (i.e. AdamW, Lion, SGD, with various batch sizes and learning rate schedules of various shapes), and employ a parameter-free algorithm as the $\tuner$ to automatically find the best scale factor for the base algorithm.

\section{The \mechanic\ algorithm}\label{sec:algo}

Our $\mechanic$ algorithm is specified in Algorithm~\ref{alg:mechanicmultibeta}. The algorithm is built by applying Theorem~\ref{thm:blackboxdirection} to a parameter-free $\tuner$ algorithm presented in Algorithm~\ref{alg:tuner}, which is described along with theoretical analysis in Appendix~\ref{sec:prooftuner}. However, when building $\mechanic$, we modify the ``pure'' theoretically tractable Algorithm~\ref{alg:tuner} to simplify the implementation while still capturing the essential intuition and maintaining the same performance. In the remainder of this section  we will provide some intuition behind the $\tuner$ update as used in $\mechanic$ as well as describing some potentially unfamiliar subtleties relating to our use of exponentially weighted moving averages.

$\mechanic$ takes as input a base algorithm that generates \emph{update vectors} $\bu_t$ as described in the previous sections. We then set $\bDelta_{t+1} = \sum_{k=1}^t\bu_k= \xbase_{t+1} - \xbase_1$. The majority of the algorithm contains our $\tuner$ method, which is a variant of the analytically tractable Algorithm~\ref{alg:tuner}, with a few modifications. Note that the indexing on $\bDelta$ is very important and may be counterintuitive: the definition of $\bh_t$ does \emph{not} include $\bDelta_{t+1}$, but rather $\bDelta_{t}$.  $\bh_t$ is the ``gradient'' that is supplied to $\tuner$, as described by Theorem~\ref{thm:blackboxdirection}.

\begin{algorithm}[t]
   \caption{\mechanic}
   \label{alg:mechanicmultibeta}
  \begin{algorithmic}[1]
      \STATE{\bfseries Input: } Base algorithm $\base$, $\xbase_1\in \R^d,  \beta\in [0,1]^n$ (default $n=6$, $\beta=(0.9,0.99,0.999,0.9999,0.99999, 0.999999)$, $\lambda\in\R$ (default $\lambda=0.01$).  $\bs_{init} \in \R$: first non-zero $\bs$ value (default $\bs_{init}=10^{-8}$). $\epsilon=10^{-8}$: small value for numerical stability.
      \STATE $v_0\gets 0\in \R^n$, $r_0\gets 0 \in \R^n$, $m_0\gets 0\in\R^n$, $\xcenter\gets \xbase_1$.
      \STATE $\bDelta_1 \gets  0\in \R^d$
      \STATE $\bs_1\gets 0 \in\R^n$. // We will use $\bs_{t,i}$ to indicate the $i$th coordinate of $\bs_t$.
      \FOR{$t=1\dots T$}
      \STATE $\bg_k \gets\nabla f(\bx_t, \bz_t)$.
      \STATE Send $\bg_t$ to $\base$, receive update $\bu_k$. // On its own, $\base$ would update $\bx_{t+1} \gets \bx_t +\bu_k$.
      \STATE {\color{purple}[Optional] Set $\bDelta_{t} = \frac{\bx_t-\xcenter}{\left(\sum_{i=1}^n \bs_{t,n}\right) + \epsilon}$ to save memory instead of storing $\bDelta_{t}$ from last round.}
      \STATE $\bDelta_{t+1} \gets \bDelta_{t} + \bu_t$.
      \STATE $\bh_t \gets\left\langle \bDelta_{t}, \bg_t + \frac{\lambda \left(\sum_{i=1}^n \bs_{t,n}\right) \|\bg_t\|\bx_t}{\|\bx_t\|}\right\rangle$ // Note use of $\bDelta_t$ rather than $\bDelta_{t+1}$.
      \STATE $m_t \gets\max(\beta m_{t-1}, \bh_t)$ (multiplications by $\beta$ and maximum are taken coordinate-wise)
      \STATE $v_{t} \gets \beta^2 v_{t-1} +  \bh_t^2$
      \STATE $r_{t} \gets \beta r_{t-1} - \bs_{t-1}\bh_t$
      \STATE $r_t\gets\max(0, r_t)$ // This step is used instead of more complicated procedures in Algorithm~\ref{alg:tuner}
      \STATE $W_t\gets \frac{\bs_{init}\cdot m_t}{n} +r_t$
      \STATE $\bs_{t+1} \gets \frac{W_t}{\sqrt{v_{t}} + \epsilon}$ 
      % \STATE $\xcenter \gets \beta_{\bx}  \xcenter + (1-\beta_{\bx}) \bx_{t}$,
      \STATE $\bx_{t+1} \gets \xcenter + \left(\sum_{i=1}^n \bs_{t+1,n}\right)\cdot  \bDelta_{t+1}$
     % \STATE {\color{blue} (alternate more numerically stable difference update): $\delta\bx_t \gets -(\bs_{t+1} -\bs_t) \bDelta_{t} - \bs_{t+1} \bu_t$ }
     %  \STATE{\color{blue} (alternate more numerically stable difference update): $\bx_{t+1} \gets \bx_t  + \delta\bx_t$}
      \ENDFOR
   \end{algorithmic}
\end{algorithm}

% \harshnote{$p_t$ does not seem to be defined anythwere in the algo 2}
% \begin{algorithm}[t]
%    \caption{\tuner (theoretically tractable version}
%    \label{alg:tuner}
%   \begin{algorithmic}[1]
%       \STATE{\bfseries Input: } $\beta\in [0,1]$,  $W_0$: initial ``wealth''.
%       \STATE $v_0\gets 0\in \R$, $r_0\gets 0 \in \R$, $\bq_t\gets 0\in \R$, $m_0\gets 0\in\R$, $\bs_1\gets 0 \in\R$
%       \FOR{$t=1\dots T$}
%       \STATE Output $\bs_t$.
%       \STATE Receive $\bh_t$.
%       \STATE $m_t \gets\max(\beta m_{t-1}, \bh_t)$
%       \STATE {\color{red} $\hat \bh_t = \text{clip}(\bh_t, -m_{t-1}, m_{t-1})$ // Red text are steps that we omit in practice (see Algorithm~\ref{alg:mechanicmultibeta})}
%       \STATE {\color{red} $\bq_t \gets \beta\bq_{t-1} + \hat\bh_t$}
%       \STATE $r_{t} \gets \beta r_{t-1} - \bs_{t-1}\hat\bh_t$
%       \STATE $W_t\gets {\color{blue} W_0} +r_t$ {\color{blue}//  Blue text is changed in practice (see Algorithm~\ref{alg:mechanicmultibeta})}
%       \STATE $v_{t} \gets \beta^2 v_{t-1} +  \hat \bh_t^2$
%       \STATE $\bs_{t+1} \gets \frac{W_t}{\sqrt{{\color{red} 4m_t^2+}v_{t}} + \epsilon}\cdot {\color{red} \text{clip}(\bq_t/\sqrt{4m_t^2+v_t},0,1)}$ 
%       \ENDFOR
%    \end{algorithmic}
% \end{algorithm}

To gain some intuition behind the update, let us consider the case that $n=1$ and $\beta=1.0$ (that is, without employing any exponentially-weighted moving averages).
% In this case, $\bs_t$ is implementing a simplified version of a parameter-free algorithm based on the ``coin-betting'' framework \cite{orabona2016coin}.
We keep track of the quantity $W_t = \bs_{init}\cdot m_t -\sum_{k=1}^{t} \bh_k\bs_k$, which is usually called the ``wealth'' of the algorithm (the quantity $r_t=-\sum_{k=1}^{t}\bh_k\bs_k$ is sometimes called the ``reward''). $\bs_{init}$ specifies the starting value for $\bs_t$ and should be an under-estimate of the true optimal scaling. We then set $\bs_{t+1}= \frac{W_t}{\sqrt{v_t}}$ (neglecting the $\epsilon$ included for numerical stability). To understand this update strategy, we can re-write the update as:
\begin{align*}
    \bs_{t+1} &= \bs_t\cdot \frac{\sqrt{v_{t-1}}}{\sqrt{v_t}} - \frac{\bs_t \bh_t}{\sqrt{v_t}}\approx \left(1-\frac{\bh_t^2}{2v_t}\right)\bs_t - \frac{\bs_t \bh_t}{\sqrt{v_t}}.
\end{align*}
Thus, the update looks like a combination of an AdaGrad-esque gradient descent step with learning rate scaled by $\bs_t$ and a kind of ``adaptive decay'' (multiplication by $1-\frac{\bh_t^2}{2v_t}$). The adaptive decay is very important for stabilizing the algorithm: without it the values for $\bs_t$ are prone to unchecked exponential growth due to scaling by $\bs_t$ in $\frac{\bs_t \bh_t}{\sqrt{v_t}}$. Intuitively, this decay is the minimum amount required to prevent instabilities.

In Appendix~\ref{sec:prooftuner}, we provide a formal Theorem bounding the regret of a variant of the procedure described above. Roughly speaking, for $\beta=1$ this result suggests:

\begin{align}
    \sum_{t=1}^T \bh_t(\bs_t - \circs)&\le   O\left((\circs + \max_t  \bs_t)\cdot m_T + \circs\cdot \log(T \circs/\bs_{init})\sqrt{\sum_{t=1}^T\bh_t^2}\right).\label{eqn:intuitiveregret}
\end{align}

In fact, the dependence of $O(\log(T)$ in equation (\ref{eqn:intuitiveregret}) can be improved to  $O(\sqrt{\log(T)})$ via more complicated algorithms (e.g. \cite{cutkosky2018black, mhammedi2020lipschitz,chen2021impossible}). However, we favor the simpler update and pleasing resemblance to familiar algorithms like AdaGrad via the Taylor expansion analysis above. Using a more advanced algorithm did not appear to improve performance in practice. Of note, the dependence on $\bs_{init}$ is very mild: this suggests that we should be able to set $\bs_{init}$ to a very small value without damaging performance. In practice, we choose $\bs_{init}=10^{-8}$.

We hypothesize that the simplified $\tuner$ we use in $\mechanic$ in fact possesses a rigorous theoretical analysis (although perhaps only with respect to simpler non-fully-worst-case adversaries), but demonstrating  such a bound appears to involve difficult technical hurdles. In particular, our implemention is designed to be ``scale-free'': rescaling the values of $\bg_t$ by any constant scalar will have no effect on $\bs_t$. This property was first achieved only recently in theoretical analysis of parameter-free algorithms \cite{mhammedi2020lipschitz}, and as-yet requires significantly more involved algorithms \cite{mhammedi2020lipschitz, jacobsen2022parameter}.

\subsection{The use of $\beta$}\label{sec:singlebeta}
We include $\beta$ to introduce some recency bias in the statistics recorded by $\mechanic$, a common feature of practical optimizers. Mathematically, we accomplish this by up-weighting the $t$th feedback to $\tuner$ by $\beta^{-t}$: $\bh_t\to\bh_t \beta^{-t}$. Thus, for example, we have $\bv_t = \sum_{k=1}^t \bh_k^2 \beta^{-2kt}$ and $r_t = -\sum_{k=1}^t \bh_k \bs_{k-1}\beta_{\bs}^{-k}$. Using these weights directly results in numerical stability issues as the weights become exponentially large. Instead, since we only need to maintain the correct ratio $\frac{W_t}{\sqrt{v_t}}$, we can cancel a factor of $\beta_{\bs}^{-t}$ from both sides, giving the update equations in Algorithm~\ref{alg:tuner}.

We found that tuning the value of $\beta$ can significantly improve performance on different tasks. Thus, we incorporated multiple $\beta$ values simultaneously in a way that obviates the need for such tuning.

Our approach is inspired by work on ``combining'' parameter free algorithms \cite{cutkosky2019combining}. The idea is simple: parameter-free algorithms typically ensure $\Rlin(0)\le \epsilon$ for some \emph{constant} $\epsilon$ set by the user. So, if $\bs_{t,1},\dots,\bs_{t,n}$ are the outputs of $n$ parameter-free algorithms with regret bounds $\Rlin^1(\circs),\dots,\Rlin^n(\circs)$, we have for any $j$:
\begin{align*}
    \sum_{t=1}^T \bh_t\left(\sum_{i=1}^n \bs_{t,i} - \circs\right)&=\sum_{t=1}^T \bh_t(\bs_{t,j} - \circs) + 
    \sum_{i\ne j}\sum_{t=1}^T \bh_t( \bs_{t,i} - 0)\\
    &=\Rlin^j(\circs) + \sum_{i\ne j} \Rlin^i(0)\le \Rlin^j(\circs) + (n-1)\epsilon.
\end{align*}
So, with small constant additive overhead in the regret, the sum of all the outputs $\bs_{t,1}+\dots+\bs_{t,n}$ achieves the same regret as the \emph{best} of all the outputs. Motivated by this observation, we instantiate $n=6$ copies of \tuner\ with different $\beta$ values and add their iterates to produce a final scaling.
% While this technique clearly does increase the complexity  of the implementation, we did find that it improved performance in practice and so is worth including.

\subsection{Weight decay}
Finally, we found that an addition of a peculiar weight-decay-esque term helped significantly on certain tasks, including vision tasks with smaller datasets, multi-objective NLP tasks and especially with reducing the variance in final results for all tasks.  Specifically, rather than providing $\bh_t = \langle \bg_t, \bDelta_t\rangle$ as the input to the $\tuner$ algorithm, we instead provide $\bh_t  = \langle \bg_t + \frac{\lambda \|\bg_t\| \left(\sum_{i=1}^n \bs_{t,i}\right) \bx_t}{\|\bx_t\|}, \bDelta_t\rangle$. We conjucture that this term is helpful in the common case that the base algorithm itself is incorporating regularization or weight-decay.

This extra term is the derivative of the regularizer $\bx\mapsto \lambda \|\bg_t\| \left(\sum_{i=1}^n \bs_{t,i}\right) \|\bx\|$. From a standard theoretical perspective, this regularization may seem overly large. However, it may not have as big an impact as one might imagine because the base algorithm does not see this regularization. Instead, the base algorithm may (or may not) perform weight decay using another method that $\mechanic$ has no insight into. That said, we do not propose an analytical explanation for this modification. We simply observed that in practice it performed well with a fixed $\lambda=0.01$.

\subsection{Runtime and Memory Cost}

$\mechanic$ incurs little additional cost over that of $\base$. In Algorithm~\ref{alg:mechanicmultibeta}, we denote $d$-dimensional vectors with bold font, and $n$-dimensional vectors and scalars with normal font (note that typically $n=6$). We use 1 additional $O(d)$ memory slot, and four $O(d)$-time steps in lines 8, 9, 10 and 17. All other steps are $O(1)$ or $O(n)$ time and so have negligible cost.

\section{Experiments}

% \begin{figure}
% \centering
% \includegraphics[width=0.24\textwidth]{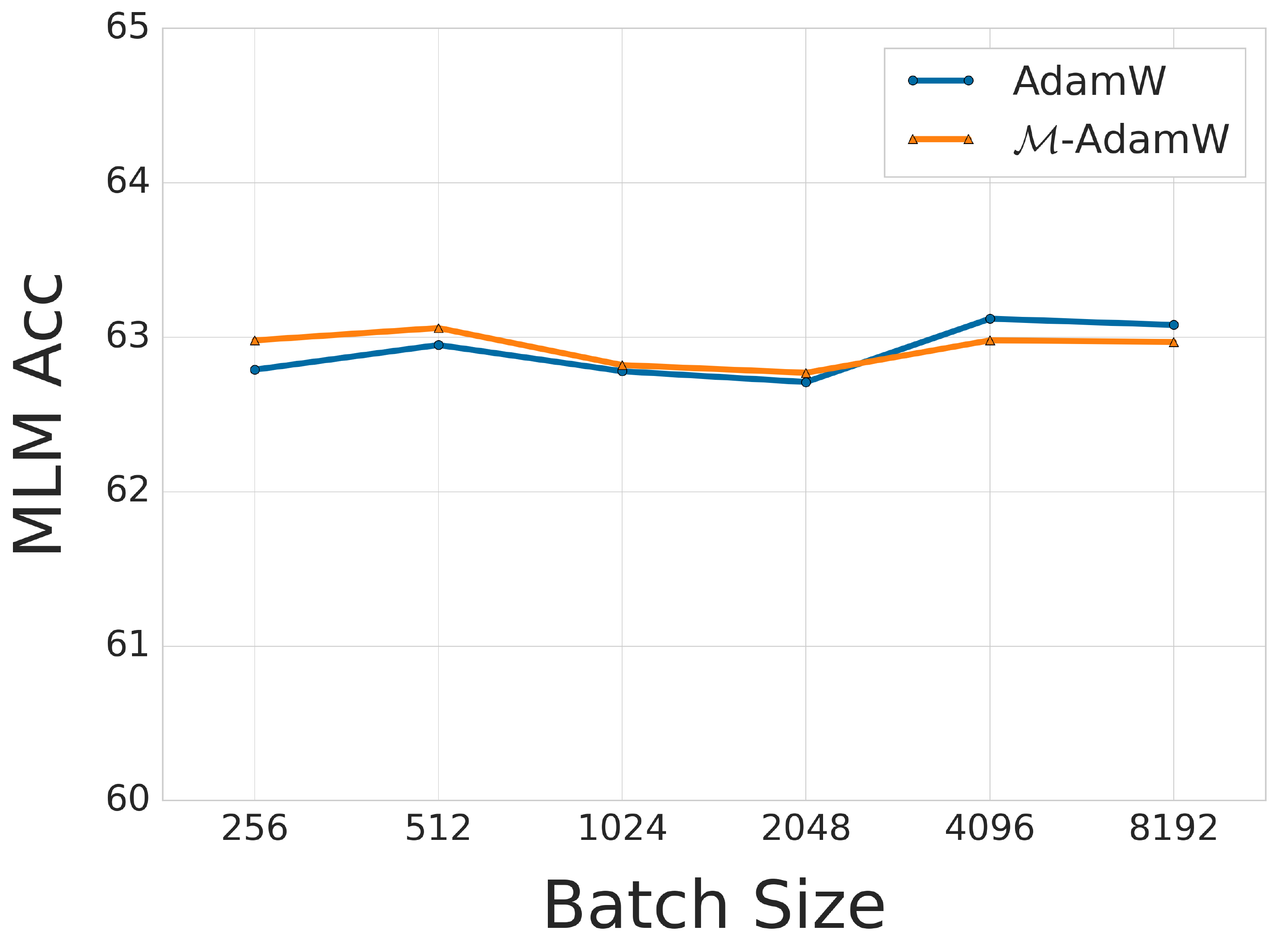}
% % \qquad
% \includegraphics[width=0.24\textwidth]{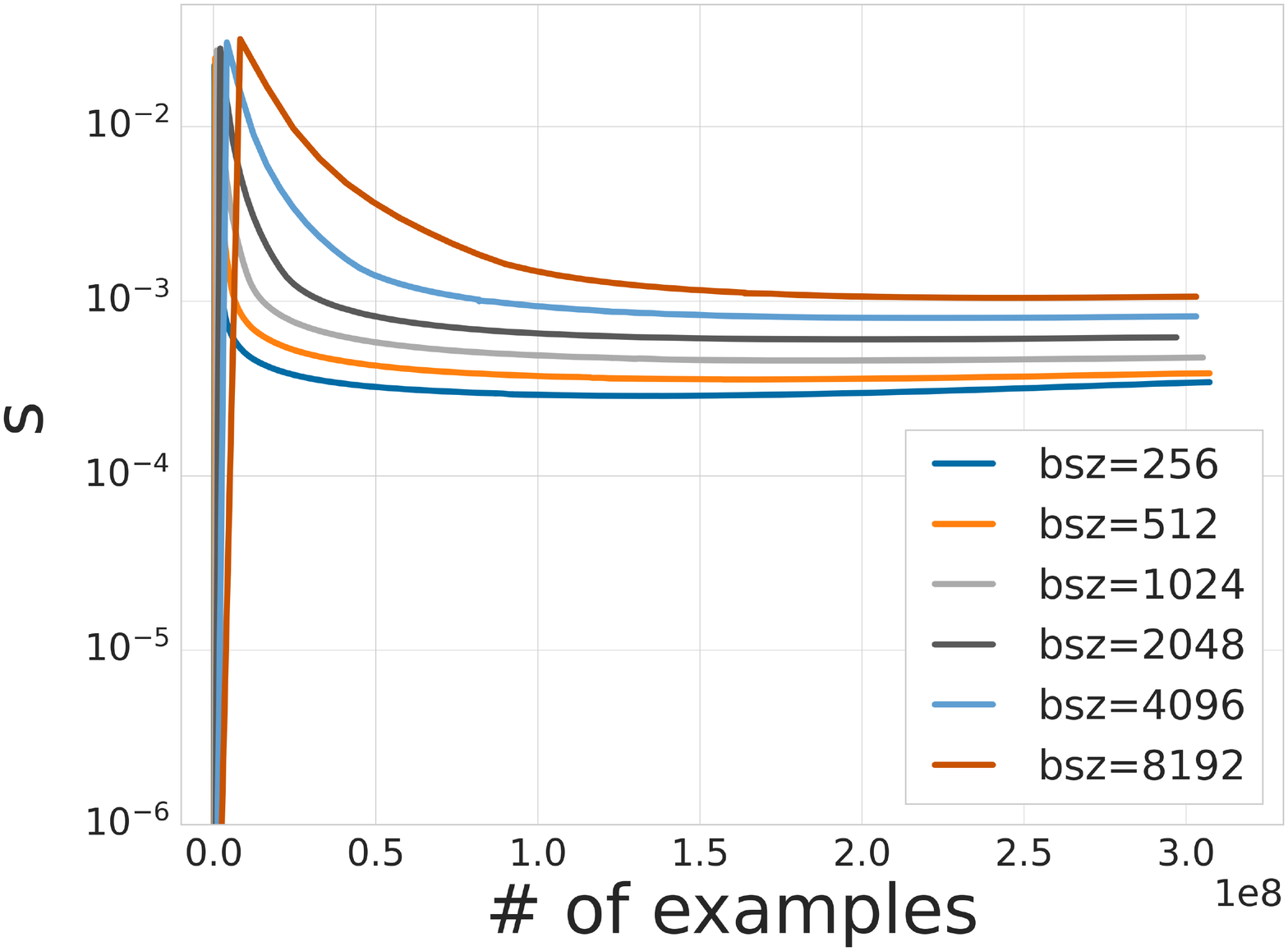}
% \includegraphics[width=0.24\textwidth]{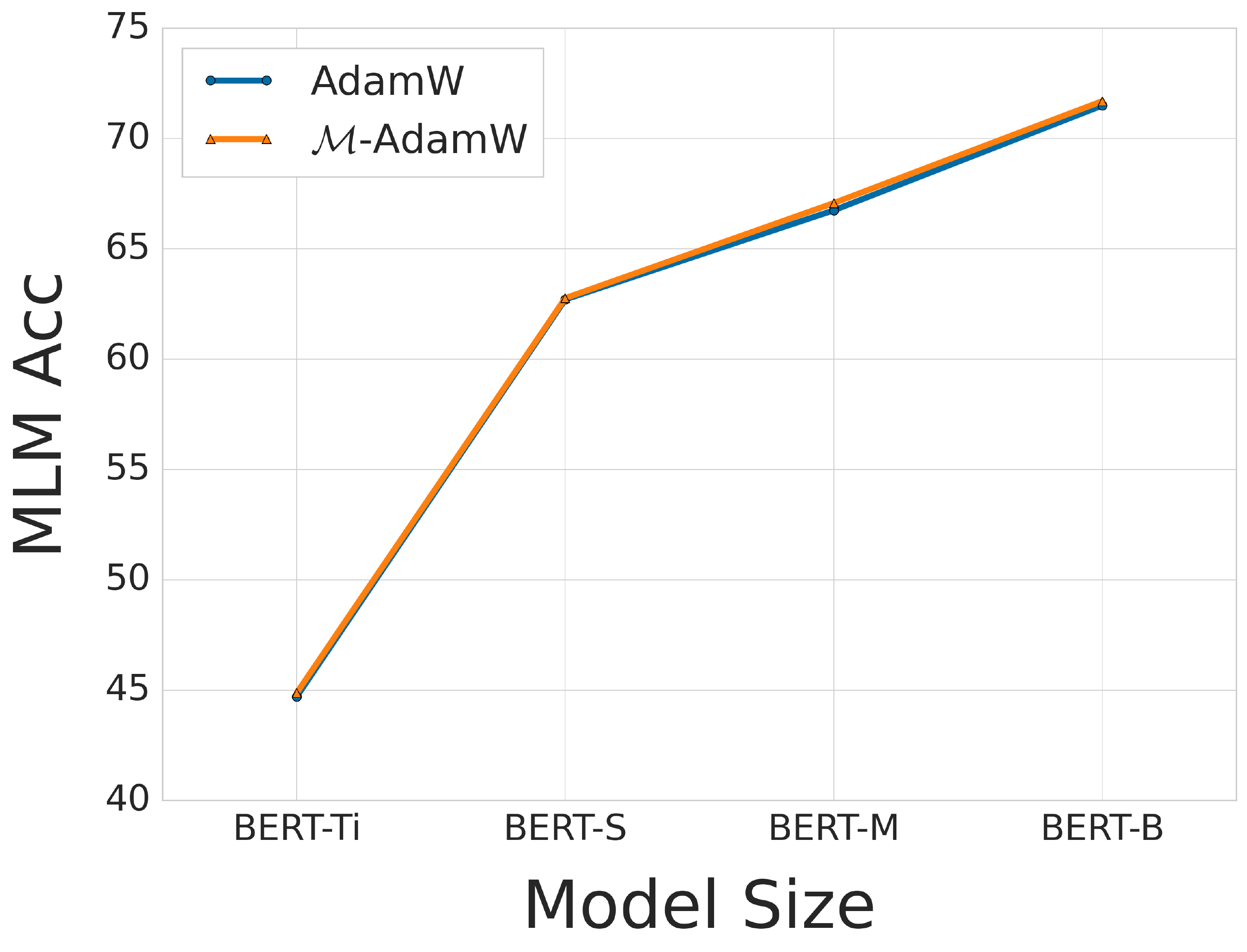}
% % \qquad
% \includegraphics[width=0.24\textwidth]{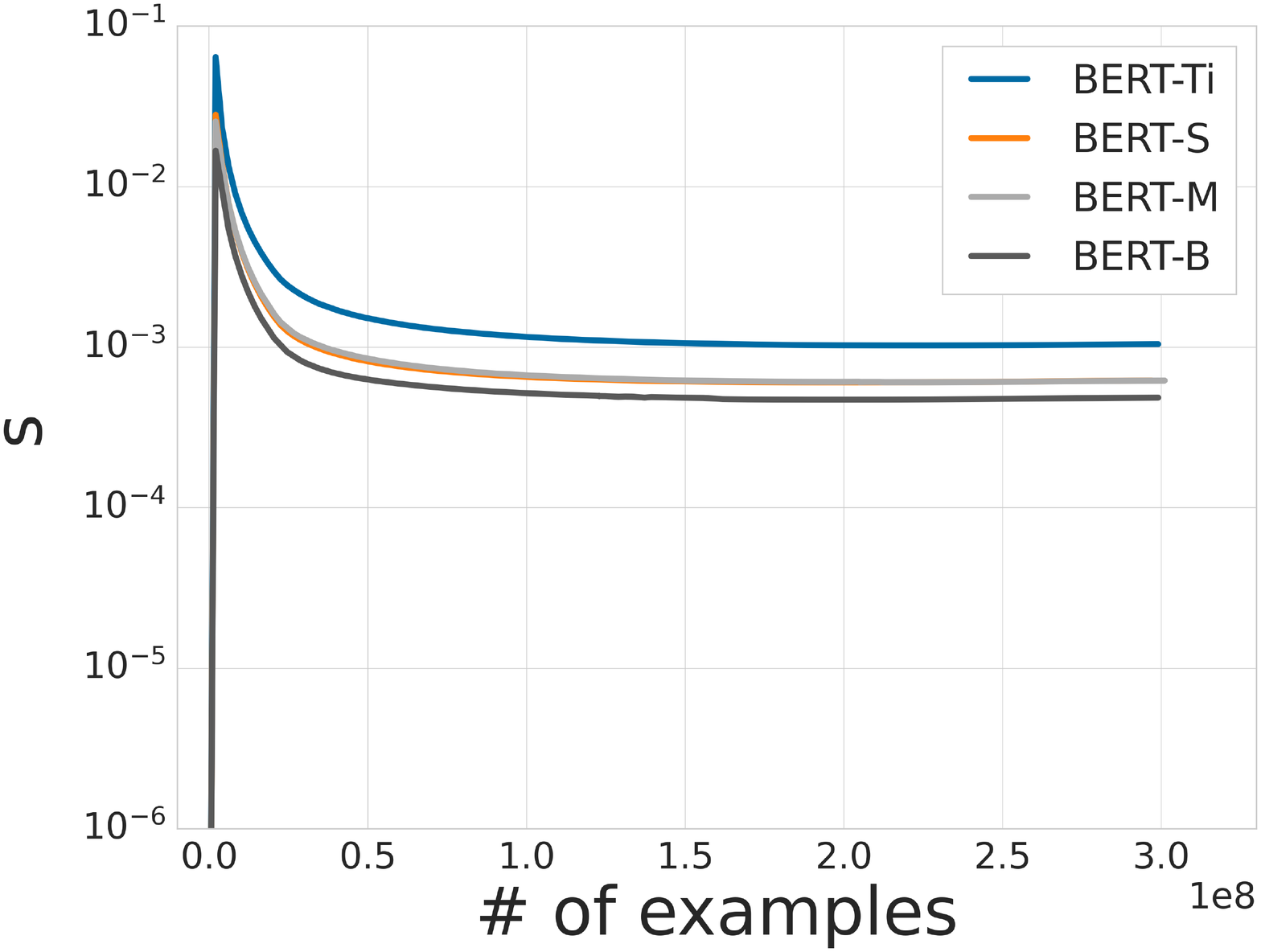}
% \caption{Scaling values $s$ learned by $\mechanic$ with varying batch size and model size.}
% \end{figure}

\subsection{Masked Language Modeling}
We perform BERT pre-training on the Wikibooks dataset following the procedure from \cite{devlin2018bert} with a few minor changes, most notably, we omit the Next Sentence Prediction (NSP) loss following \cite{liu2019roberta}. Masked language modeling (MLM) requires reconstructing randomly masked tokens given an input sequence of tokens. As shown in Table \ref{tab:bert}, using \mechanic\ leads to a noticeable improvement in MLM accuracy.

\begin{table}
    \centering
    \small
    \begin{tabular}{ccccccccc}
    \toprule
        Model & Size & Pre Opt & MLM & Optimizer & MNLI-m/mm & QNLI & SST-2 & QQP  \\
        \midrule
        % BERT-S & 13.7M & LION & 62.73 & 63.0 \\
        % BERT-S & 13.7M & AdamW & 62.59 & 62.78 \\
         \multirow{4}{*}{BERT-B} & \multirow{4}{*}{110M} & \multirow{2}{*}{AdamW} & \multirow{2}{*}{71.5} & AdamW & 84.3/84.8 & 91.0 & 92.4 & 90.1\\
        & & & & $\mathcal{M}$-AdamW & 83.7/83.5 & 90.6 & 91.9 & 90.5 \\
        \cmidrule{3-5}
        & & \multirow{2}{*}{$\mathcal{M}$-AdamW} & \multirow{2}{*}{\textbf{71.7}} & AdamW & \textbf{84.7}/\textbf{85.1} & 91.2 & \textbf{93.3} & 90.7\\
        & & & & $\mathcal{M}$-AdamW & 84.5/84.4 & \textbf{91.3} & 92.5 & \textbf{91.1} \\
        \midrule
        \multirow{4}{*}{BERT-B} & \multirow{4}{*}{110M} & \multirow{2}{*}{Lion} & \multirow{2}{*}{71.8} & Lion  & 83.4/83.5 & 86.8 & 89.7 & 89.4\\
        & & & & $\mathcal{M}$-Lion & 83.1/83.8 & \textbf{89.9} & 91.0 & 90.2 \\
        \cmidrule{3-5}
        & & \multirow{2}{*}{$\mathcal{M}$-Lion} & \multirow{2}{*}{\textbf{72.0}} & Lion & \textbf{84.5}/84.2 & 89.0 & \textbf{91.2} & \textbf{90.8} \\
        & & & & $\mathcal{M}$-Lion & 84.2/\textbf{84.2} & 88.6 & 91.1 & 90.2\\
        \midrule
        \multirow{4}{*}{BERT-L} & \multirow{4}{*}{340M} & \multirow{2}{*}{AdamW} & \multirow{2}{*}{\textbf{75.4}} & AdamW & 86.2/86.4 & 92.2 & 93.9 & 91.3\\
        & & & & $\mathcal{M}$-AdamW & 86.1/86.4 & 92.5 & 93.7 & 91.5\\
        \cmidrule{3-5}
        & & \multirow{2}{*}{$\mathcal{M}$-AdamW} & \multirow{2}{*}{75.3} & AdamW & \textbf{86.3}/\textbf{86.5} & \textbf{92.7} & \textbf{94.4} & 91.4\\
        & & & & $\mathcal{M}$-AdamW & 86.1/86.3 & 91.7 & 93.5 & \textbf{91.5}\\
        \midrule
        % & & Lion & 75.7 & Lion & 84.6 & 90.7 & 92.8 & 91.1 \\
        % & & Lion & 75.7 & $\mathcal{M}$-Lion & 85.6/85.3 & 89.0 & 91.2 & \\
        \multirow{4}{*}{BERT-L} & \multirow{4}{*}{340M} & \multirow{2}{*}{Lion} & \multirow{2}{*}{\textbf{75.7}} & Lion & 86.7/86.6 & 90.7 & 92.9 & 91.1 \\
        & & & & $\mathcal{M}$-Lion & 86.0/86.2 & 90.3 & \textbf{93.4} & 91.2\\
        \cmidrule{3-5}
        & & \multirow{2}{*}{$\mathcal{M}$-Lion} & \multirow{2}{*}{75.5} & Lion & \textbf{87.4}/\textbf{87.4} & \textbf{92.9} & 93.3 & \textbf{91.7}\\
        & & & & $\mathcal{M}$-Lion & 87.2/87.1 & 91.5 & 92.3 & 91.6\\
        \bottomrule
        \addlinespace[0.3cm]
    \end{tabular}
        \caption{Comparing \mechanic\ on BERT. 5 largest datasets from GLUE. Results reported are peak validation scores averaged over 3 runs, both for the baseline and \mechanic\ tuned models.}
        \label{tab:bert}
\end{table}

\textbf{Varying batch size and model size:} Past works observe that the scale factor $s$ should decrease as either batch size is decreased or model size is increased \cite{bottou08large, Gu2015MuPropUB}. To inspect the scale factor that \mechanic\ learns, we vary the batch size and model size while pre-training BERT using \mechanic. As shown in Figure \ref{fig:s_values}, in both cases, \mechanic\ learns to decrease the scale factor $s$ when decreasing the batch size and when increasing the model size.

\begin{figure*}
\begin{subfigure}{0.9\linewidth}
\includegraphics[width=0.49\linewidth]{fig/batch_size_acc.pdf} 
\includegraphics[width=0.49\linewidth]{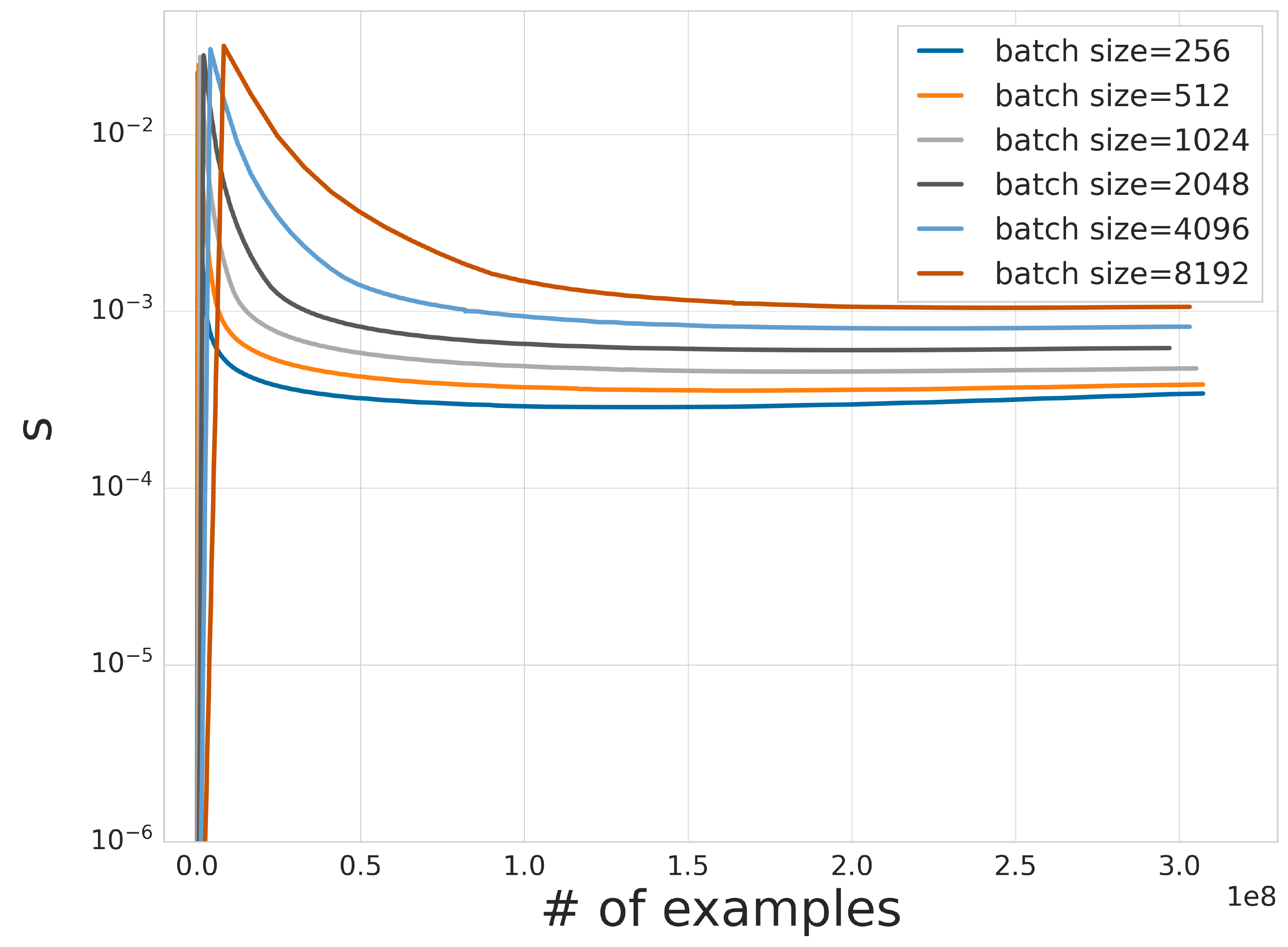} 
% \hspace{0.2cm}
\caption{Varying batch size}
\label{subfig:batch_size}
\end{subfigure}
\begin{subfigure}{0.9\linewidth}
\includegraphics[width=0.49\linewidth]{fig/model_size_acc.pdf} 
\includegraphics[width=0.49\linewidth]{fig/s_model_size.pdf} 
\caption{Varying model size}
\label{subfig:model_size}
\end{subfigure}
\caption{Scaling values $\bs$ learned by $\mechanic$ while varying batch size and model size.}
\label{fig:s_values}
\end{figure*}
% \begin{figure}[H]
% \centering
% % \includegraphics[width=0.24\textwidth]{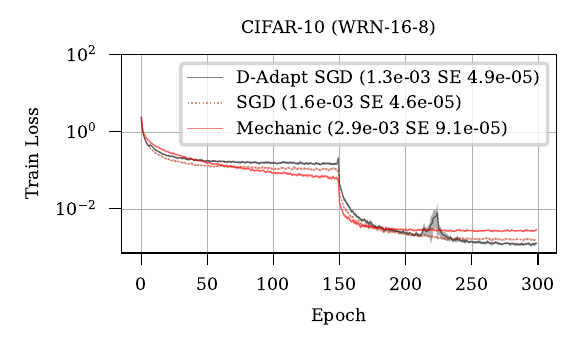}
% % \includegraphics[width=0.24\textwidth]{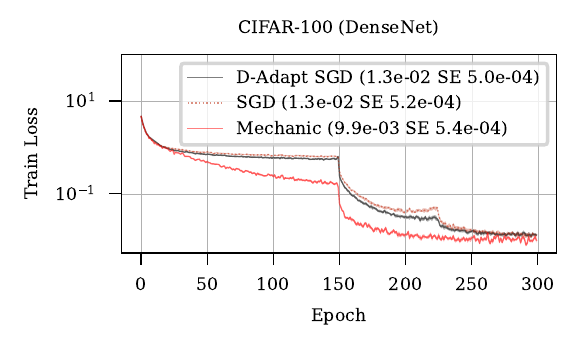}
% \includegraphics[width=0.48\textwidth]{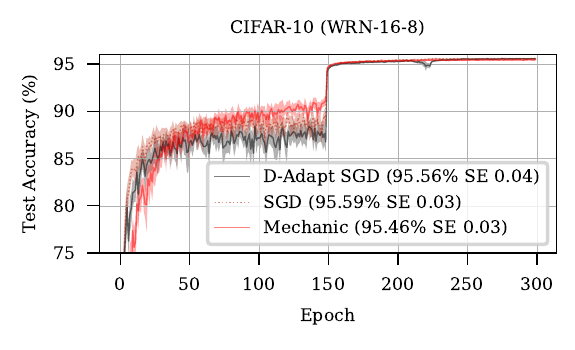}
% % \qquad
% \includegraphics[width=0.48\textwidth]{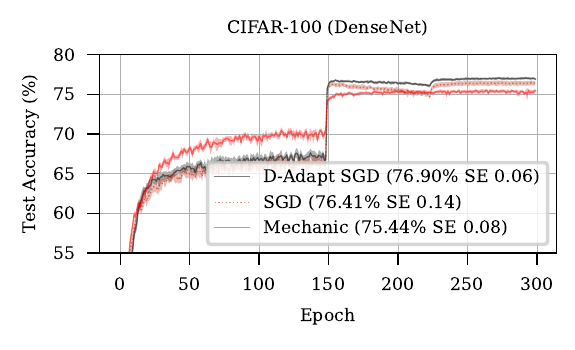}
% \includegraphics[width=0.48\textwidth]{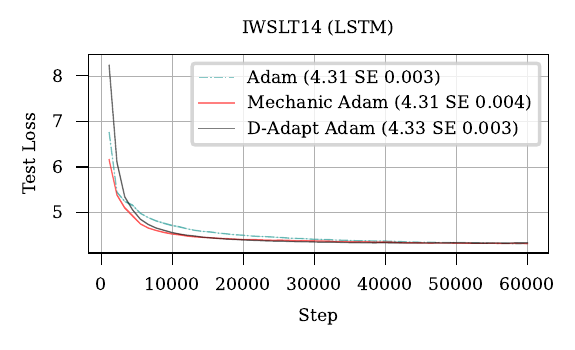}
% % \qquad
% \includegraphics[width=0.48\textwidth]{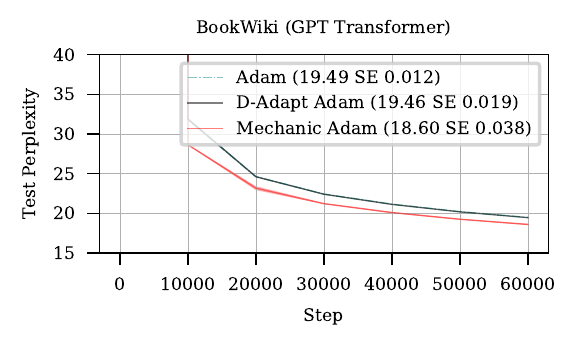}
% % \includegraphics[width=0.24\textwidth]{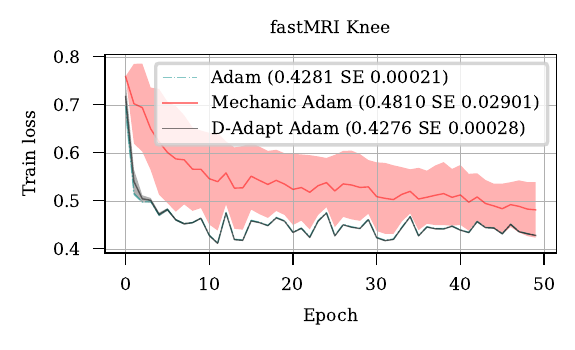}
% % \includegraphics[width=0.24\textwidth]{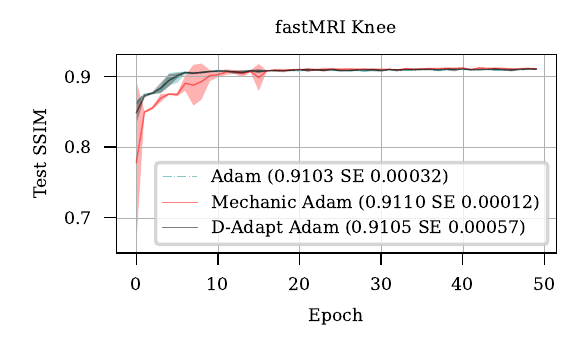}
% % \includegraphics[width=0.24\textwidth]{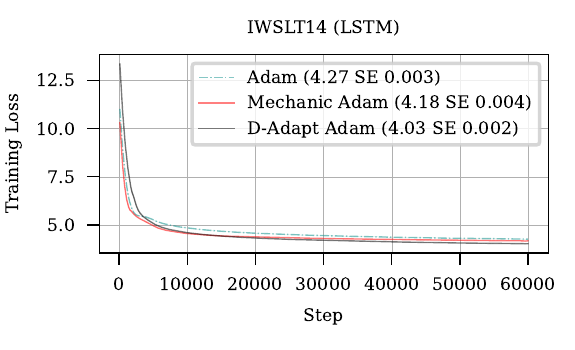}

% % \includegraphics[width=0.24\textwidth]{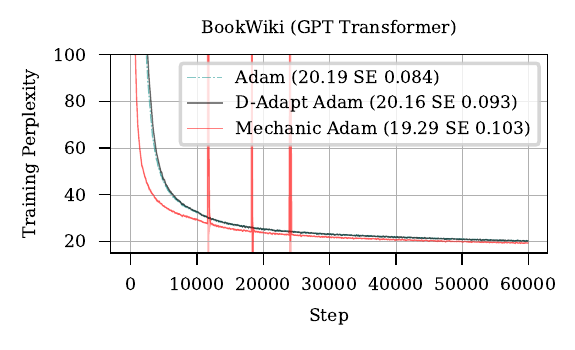}
% \caption{Comparing \mechanic\ with D-adaptation and Adam or SGD with manually tuned learning rates on vision and language tasks.}
% \label{fig:dadapt}
% \end{figure}
\textbf{Finetuning pre-trained models:}
In addition to pre-training, we evaluate our models on the 5 largest datasets from the GLUE suite \citep{Wang2018GLUEAM}. One possible failure mode of \mechanic\ tuned pre-trained models could have been that, even though they lead to high accuracy at pre-training time, transfer learning may fail at finetuning time. 

To ensure that standard transfer learning pipelines still works with \mechanic\ pre-trained models, we finetune them without a learning rate tuner using the AdamW optimizer and find not that only \mechanic\ pre-trained models lead to higher accuracy at pre-training time, they also outperform in finetuning more often than not. We finetune BERT-B (110M) and BERT-L (340M) models for at most 10 epochs on each of the GLUE datasets and report results on the GLUE dev set in Table \ref{tab:bert}.

\textbf{Using \mechanic\ for finetuning:} We also investigated using  $\mechanic$ for finetuning. Typically, to not erase the progress already made, a much lower base learning rate is employed at finetuning time. This could easily have been a potential failure mode of any kind of automatic learning rate tuner as such strategies might ``explore'' a high learning rate at the beginning of the optimization procedure. Fortunately, we observed that this inductive bias typically baked at finetuning time is still maintained when using \mechanic.

\subsection{Image Classification}
\begin{table}
    \centering
    \small
    \begin{tabular}{cccccccc}
    \toprule
        Model & Size & Pre Opt & Pre Acc & Optimizer & I1K & Cifar100 & Cifar10  \\
        \midrule
        \multicolumn{8}{c}{CNN from scratch on CIFAR datasets} \\
        \midrule
        \multirow{3}{*}{ResNet-18} & \multirow{3}{*}{11M} &- &- & Mom & -& \textbf{77.6}& \textbf{95.4}\\
        & & -& -& $\mathcal{M}$-Mom &- & 75.3& 94.1\\
        & & -&- & $\mathcal{M}$-Mom ($\lambda=0.1$) &- &76.6 & 95.3\\
        \midrule
        \multirow{2}{*}{WRN-40-10} & \multirow{2}{*}{56M} &- &- & Mom &- & \textbf{79.9} & -\\
         & & -& -& $\mathcal{M}$-Mom &- & 79.6 & -\\
        \midrule
        \multicolumn{8}{c}{Pre-train on JFT-300M} \\
        \midrule
        \multirow{4}{*}{ViT-B/16} & \multirow{4}{*}{86M} & \multirow{2}{*}{AdamW} & \multirow{2}{*}{48.5} & Mom & 84.7 & \textbf{91.9} & 99.1 \\
        & &  &  & $\mathcal{M}$-Mom & \textbf{84.7} & 90.7 & 99.1\\
        \cmidrule{3-5}
        & & \multirow{2}{*}{$\mathcal{M}$-AdamW} & \multirow{2}{*}{\textbf{49.9}} & Mom & 84.2 & 91.5 & 99.1 \\
        & & & & $\mathcal{M}$-Mom & 84.1 & 90.3 & \textbf{99.1}\\
        \midrule
        \multirow{4}{*}{ViT-B/16} & \multirow{4}{*}{86M} & \multirow{2}{*}{Lion} & \multirow{2}{*}{47.0} & Mom & \textbf{85.3} & 92.1 & 99.2 \\
        & &  & & $\mathcal{M}$-Mom & 85.2 & 91.0 & 99.2 \\
        \cmidrule{3-5}
        & & \multirow{2}{*}{$\mathcal{M}$-Lion} & \multirow{2}{*}{\textbf{49.6}} & Mom & 84.7 & \textbf{92.3} & \textbf{99.2} \\
        & & & & $\mathcal{M}$-Mom & 84.6 & 90.9 & 99.1 \\
        % \addlinespace[0.1cm]
        \midrule
        \addlinespace[0.1cm]
        \multirow{4}{*}{ViT-L/16} & \multirow{4}{*}{307M} & \multirow{2}{*}{AdamW} & \multirow{2}{*}{54.4} & Mom & \textbf{86.7} & \textbf{93.9} & 99.5\\
        & &  &  & $\mathcal{M}$-Mom & 86.6  & 92.7 & \textbf{99.5} \\
        \cmidrule{3-5}
        & & \multirow{2}{*}{$\mathcal{M}$-AdamW} & \multirow{2}{*}{\textbf{54.4}} & Mom & 86.3 & 93.4 & 99.4 \\
        & & & & $\mathcal{M}$-Mom & 86.0 & 92.0 & 99.3 \\
        \midrule
       \multirow{4}{*}{ViT-L/16} & \multirow{4}{*}{307M} & \multirow{2}{*}{Lion} & \multirow{2}{*}{52.0} & Mom & 86.7 & 93.8 & 99.4 \\
        & & & & $\mathcal{M}$-Mom &  86.7 & 93.0 & 99.4 \\
        \cmidrule{3-5}
        & & \multirow{2}{*}{$\mathcal{M}$-Lion} & \multirow{2}{*}{\textbf{55.4}} & Mom & 87.2 & \textbf{94.0} & 99.4\\
        & & &  & $\mathcal{M}$-Mom & \textbf{87.2} & 93.4 & \textbf{99.4}\\
         \bottomrule
         \addlinespace[0.3cm]
    \end{tabular}
        \caption{Comparing \mechanic\ on vision models. All fine-tuning results are averaged over 3 independent runs with different seeds.}
            \label{tab:vision}
\end{table}

In this Section, we present results on popular Image Classification tasks. Apart from training from scratch, we also perform transfer learning experiments where we pre-train on the popular JFT-300M \citep{jft2017} dataset and finetune on ImageNet, Cifar-10 and Cifar-100 datasets. We follow the exact setting employed in \cite{dosovitskiy2021an_vit} for both pre-training and finetuning.

As shown in Table \ref{tab:vision}, \mechanic\ is quite competitive across the board and produces results either very close to the baseline or better. Since \mechanic\ optimizes for the train loss, in general, we observe that it results in better \textbf{test} performance on tasks with large amounts of data where the model is unable to overfit to the train set. For instance, we see that \mechanic\ beats the baseline substantially when pre-training ViT models on JFT-300M, whereas it lags slightly behind on smaller datasets like ImageNet-1k or CIFAR-10/100. Even though we fix $\lambda$ to 0.01 as default for all our reported experiments, we find that for small datasets like CIFAR-10, increasing it led to better test performance.

\subsection{Comparison with D-adaptation}
Recently, \cite{defazio2023learning} introduced the D-adaptation algorithm, with the same goal of learning the correct scale $\bs$ for SGD and Adam base optimizers. D-adaptation showed impressive empirical results on a range of popular deep learning tasks, so we compare \mechanic\ with D-adaptation on a selection of tasks that D-adaptation worked well on, using code provided by the authors. Hyper-parameter settings were kept the same to ensure a fair comparison. In contrast to D-adaptation, \mechanic\ does not require modification for different base optimizers and, as shown in Figure \ref{fig:dadapt}, it remains quite competitive on small datasets like CIFAR-10/100 while outperforming both a manually tuned baseline and D-adaptation on bigger tasks like IWSLT14 and language modeling on BookWiki dataset. We present additional results in Appendix~\ref{subsec:dadapt}, including a comparison on a suite of 12 logistic regression problems.

% \begin{figure}
% \centering
% \includegraphics[width=0.9\textwidth]{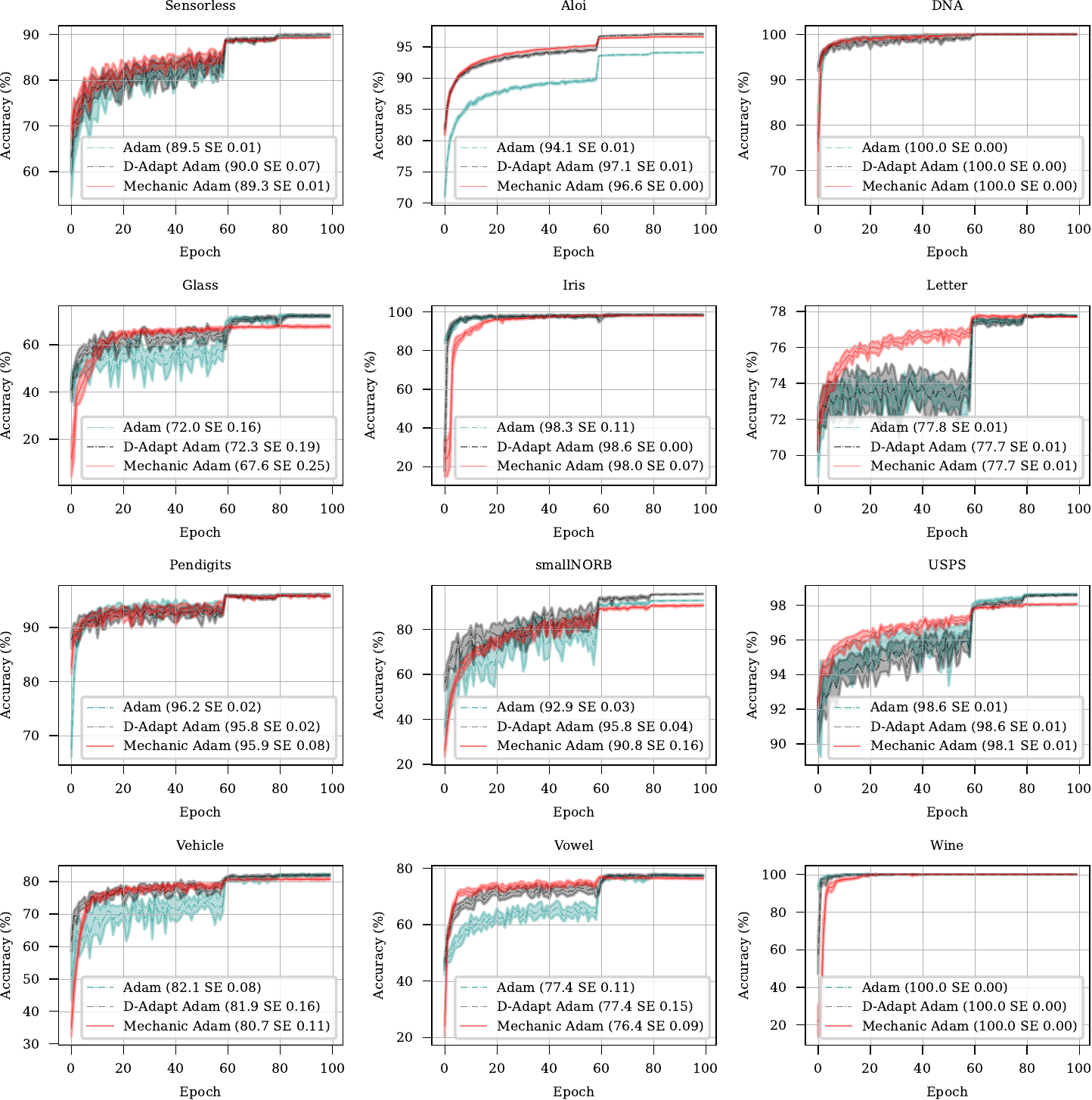}
% \caption{Scaling values $s$ learned by $\mechanic$ with varying batch size and model size.}
% \end{figure}

\begin{figure}
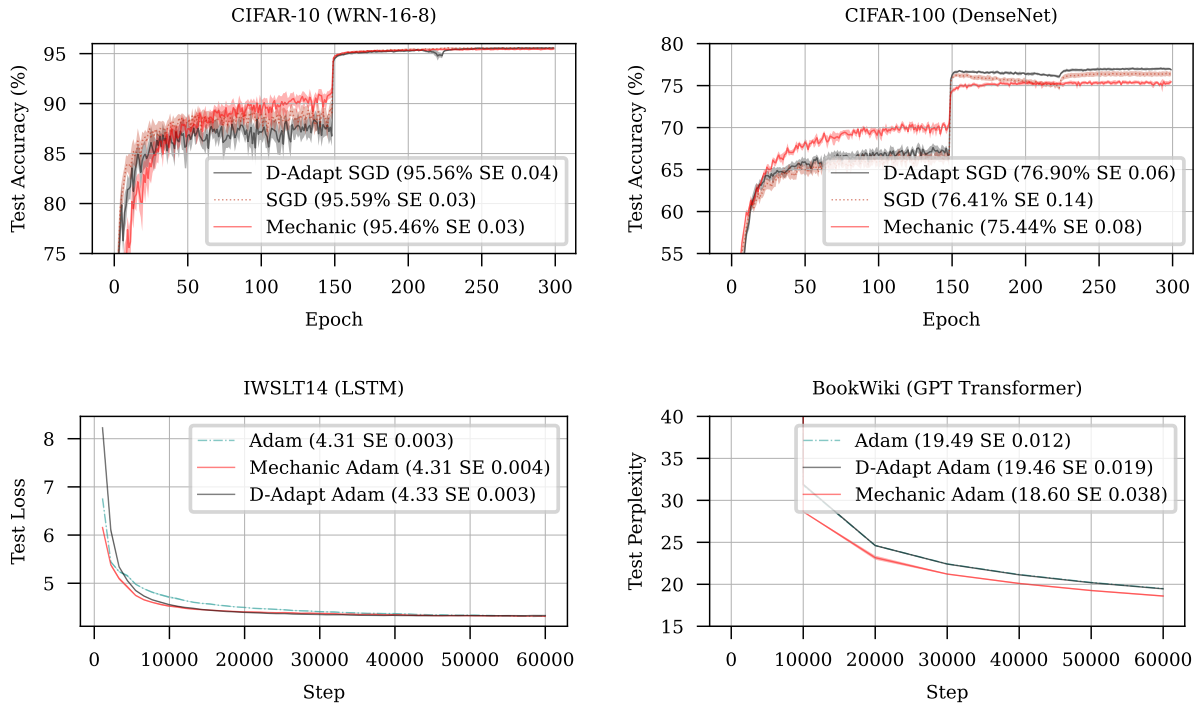

\centering
\includegraphics[width=0.49\textwidth]{fig/mechanic_cifar.pdf}
% \qquad
\includegraphics[width=0.49\textwidth]{fig/mechanic_cifar100.pdf}
\includegraphics[width=0.49\textwidth]{fig/mechanic_lstm.pdf}
% \qquad
\includegraphics[width=0.49\textwidth]{fig/mechanic_gpt.pdf}
\caption{Comparing \mechanic\ with D-adaptation and Adam or SGD with manually tuned learning rates on vision and language tasks.}
\label{fig:dadapt}
\end{figure}

% \subsection{Smaller Models}

% \begin{table}[H]
%     \centering
%     \small
%             \label{tab:wideresnet}
%     \begin{tabular}{cccc}
%     \toprule
%         Optimizer & Cifar100 Test Accuracy & weight decay (tuned for SGD) & lr (tuned for SGD) \\
%         \midrule
%         Mom & 79.9\% & 0.0005&0.1\\
%         $\mathcal{M}$-Mom ($\lambda = 0.01$) & 79.6\%&0.0005&-\\
%         \bottomrule
%     \end{tabular}
%         \caption{Performance on CIFAR100 with WideResnet, depth 40 and widen factor 10.}
% \end{table}

% \begin{table}[H]
%     \centering
%     \small
%             \label{tab:resnet}
%     \begin{tabular}{ccccc}
%     \toprule
%         Optimizer & Cifar10 Test Accuracy & weight decay (tuned for SGD) & lr (tuned for SGD) \\
%         \midrule
%         Mom & 95.4\% & 0.0005&0.1\\
%         $\mathcal{M}$-Lion ($\lambda=0.01$) & 94.1\%&0.0005&-\\
%         $\mathcal{M}$-Lion ($\lambda=0.1$) &  95.3\%&0.0005&-\\
%         \bottomrule
%     \end{tabular}
%         \caption{Performance on CIFAR10 with ResNet18. Although we report all data with the default value $\lambda=0.01$, here $\mechanic$ performs better with $\lambda=0.1$.}
% \end{table}

\section{Conclusion}

$\mechanic$ is a new technique for scaling the updates of any base optimization algorithm. Our approach provides a practical implementation of recent developments in optimization theory, and is able to match the performance of tuned baselines on large-scale machine learning tasks. This work suggests several natural future directions. First, is there a theoretical motivation for our weight-decay term? Next, is it possible to leverage similar techniques to learn a \emph{per-layer} scale factor? Such a capacity would not significantly increase computation cost, but by allowing more degrees of freedom may yield a method that significantly outperforms baselines since it is infeasible to manually tune a scale factor for every layer. 

\section*{Acknowledgements}
We are grateful to Konstantin Mishchenko for fruitful discussions and Emil Praun for feedback on an earlier draft of this paper. Ashok Cutkosky is supported by the National Science Foundation grant CCF-2211718 as well as a Google gift.

\bibliography{references}

\begin{thebibliography}{10}

\bibitem{abernethy2008optimal}
Jacob Abernethy, Peter~L Bartlett, Alexander Rakhlin, and Ambuj Tewari.
\newblock Optimal strategies and minimax lower bounds for online convex games.
\newblock In {\em Proceedings of the nineteenth annual conference on
  computational learning theory}, pages 415--424, 2008.

\bibitem{agarwal2020disentangling}
Naman Agarwal, Rohan Anil, Elad Hazan, Tomer Koren, and Cyril Zhang.
\newblock Disentangling adaptive gradient methods from learning rates.
\newblock {\em arXiv preprint arXiv:2002.11803}, 2020.

\bibitem{arjevani2019lower}
Yossi Arjevani, Yair Carmon, John~C Duchi, Dylan~J Foster, Nathan Srebro, and
  Blake Woodworth.
\newblock Lower bounds for non-convex stochastic optimization.
\newblock {\em arXiv preprint arXiv:1912.02365}, 2019.

\bibitem{baydin2018automatic}
Atilim~Gunes Baydin, Barak~A Pearlmutter, Alexey~Andreyevich Radul, and
  Jeffrey~Mark Siskind.
\newblock Automatic differentiation in machine learning: a survey.
\newblock {\em Journal of machine learning research}, 18, 2018.

\bibitem{bottou08large}
L.~Bottou and O.~Bousquet.
\newblock The tradeoffs of large scale learning.
\newblock In {\em Advances in Neural Information Processing Systems (NIPS)},
  2008.

\bibitem{carmon2022making}
Yair Carmon and Oliver Hinder.
\newblock Making sgd parameter-free.
\newblock {\em Conference on Learning Theory}, 2022.

\bibitem{chandra2022gradient}
Kartik Chandra, Audrey Xie, Jonathan Ragan-Kelley, and Erik Meijer.
\newblock Gradient descent: The ultimate optimizer.
\newblock {\em Advances in Neural Information Processing Systems},
  35:8214--8225, 2022.

\bibitem{chen2021impossible}
Liyu Chen, Haipeng Luo, and Chen-Yu Wei.
\newblock Impossible tuning made possible: A new expert algorithm and its
  applications.
\newblock In {\em Conference on Learning Theory}, pages 1216--1259. PMLR, 2021.

\bibitem{chen2023symbolic}
Xiangning Chen, Chen Liang, Da~Huang, Esteban Real, Kaiyuan Wang, Yao Liu, Hieu
  Pham, Xuanyi Dong, Thang Luong, Cho-Jui Hsieh, et~al.
\newblock Symbolic discovery of optimization algorithms.
\newblock {\em arXiv preprint arXiv:2302.06675}, 2023.

\bibitem{lion_chen2023symbolic}
Xiangning Chen, Chen Liang, Da~Huang, Esteban Real, Kaiyuan Wang, Yao Liu, Hieu
  Pham, Xuanyi Dong, Thang Luong, Cho-Jui Hsieh, Yifeng Lu, and Quoc~V. Le.
\newblock Symbolic discovery of optimization algorithms, 2023.

\bibitem{chen2023nonstochastic}
Xinyi Chen and Elad Hazan.
\newblock A nonstochastic control approach to optimization.
\newblock {\em arXiv preprint arXiv:2301.07902}, 2023.

\bibitem{cutkosky2019artificial}
Ashok Cutkosky.
\newblock Artificial constraints and hints for unbounded online learning.
\newblock In {\em Proceedings of the Thirty-Second Conference on Learning
  Theory}, pages 874--894, 2019.

\bibitem{cutkosky2019combining}
Ashok Cutkosky.
\newblock Combining online learning guarantees.
\newblock In {\em Proceedings of the Thirty-Second Conference on Learning
  Theory}, pages 895--913, 2019.

\bibitem{cutkosky2017online}
Ashok Cutkosky and Kwabena Boahen.
\newblock Online learning without prior information.
\newblock In {\em Conference on Learning Theory}, pages 643--677, 2017.

\bibitem{cutkosky2016online}
Ashok Cutkosky and Kwabena~A Boahen.
\newblock Online convex optimization with unconstrained domains and losses.
\newblock In D.~D. Lee, M.~Sugiyama, U.~V. Luxburg, I.~Guyon, and R.~Garnett,
  editors, {\em Advances in Neural Information Processing Systems 29}, pages
  748--756. Curran Associates, Inc., 2016.

\bibitem{cutkosky2018black}
Ashok Cutkosky and Francesco Orabona.
\newblock Black-box reductions for parameter-free online learning in {B}anach
  spaces.
\newblock In {\em Conference On Learning Theory}, pages 1493--1529, 2018.

\bibitem{defazio2023learning}
Aaron Defazio and Konstantin Mishchenko.
\newblock Learning-rate-free learning by d-adaptation.
\newblock {\em arXiv preprint arXiv:2301.07733}, 2023.

\bibitem{devlin2018bert}
Jacob Devlin, Ming-Wei Chang, Kenton Lee, and Kristina Toutanova.
\newblock {BERT}: Pre-training of deep bidirectional transformers for language
  understanding.
\newblock In {\em Proceedings of the 2019 Conference of the North {A}merican
  Chapter of the Association for Computational Linguistics: Human Language
  Technologies, Volume 1 (Long and Short Papers)}, pages 4171--4186,
  Minneapolis, Minnesota, 2019. Association for Computational Linguistics.

\bibitem{dosovitskiy2021an_vit}
Alexey Dosovitskiy, Lucas Beyer, Alexander Kolesnikov, Dirk Weissenborn,
  Xiaohua Zhai, Thomas Unterthiner, Mostafa Dehghani, Matthias Minderer, Georg
  Heigold, Sylvain Gelly, Jakob Uszkoreit, and Neil Houlsby.
\newblock An image is worth 16x16 words: Transformers for image recognition at
  scale.
\newblock In {\em International Conference on Learning Representations}, 2021.

\bibitem{duchi10adagrad}
J.~Duchi, E.~Hazan, and Y.~Singer.
\newblock Adaptive subgradient methods for online learning and stochastic
  optimization.
\newblock In {\em Conference on Learning Theory (COLT)}, pages 257--269, 2010.

\bibitem{ghadimi2013stochastic}
Saeed Ghadimi and Guanghui Lan.
\newblock Stochastic first-and zeroth-order methods for nonconvex stochastic
  programming.
\newblock {\em SIAM Journal on Optimization}, 23(4):2341--2368, 2013.

\bibitem{Gu2015MuPropUB}
Shixiang~Shane Gu, Sergey Levine, Ilya Sutskever, and Andriy Mnih.
\newblock Muprop: Unbiased backpropagation for stochastic neural networks.
\newblock {\em CoRR}, abs/1511.05176, 2015.

\bibitem{hazan2019introduction}
Elad Hazan.
\newblock Introduction to online convex optimization.
\newblock {\em arXiv preprint arXiv:1909.05207}, 2019.

\bibitem{ivgi2023dog}
Maor Ivgi, Oliver Hinder, and Yair Carmon.
\newblock Dog is sgd's best friend: A parameter-free dynamic step size
  schedule.
\newblock {\em arXiv preprint arXiv:2302.12022}, 2023.

\bibitem{jacobsen2022parameter}
Andrew Jacobsen and Ashok Cutkosky.
\newblock Parameter-free mirror descent.
\newblock In {\em Proceedings of Thirty Fifth Conference on Learning Theory},
  volume 178 of {\em Proceedings of Machine Learning Research}, pages
  4160--4211. PMLR, 2022.

\bibitem{kempka2019adaptive}
Michal Kempka, Wojciech Kotlowski, and Manfred~K Warmuth.
\newblock Adaptive scale-invariant online algorithms for learning linear
  models.
\newblock In {\em International Conference on Machine Learning}, pages
  3321--3330, 2019.

\bibitem{kingma2014adam}
Diederik~P. Kingma and Jimmy Ba.
\newblock Adam: A method for stochastic optimization, 2014.

\bibitem{levy2021stormplus}
Kfir Levy, Ali Kavis, and Volkan Cevher.
\newblock Storm+: Fully adaptive sgd with recursive momentum for nonconvex
  optimization.
\newblock {\em Advances in Neural Information Processing Systems},
  34:20571--20582, 2021.

\bibitem{liu2019roberta}
Yinhan Liu, Myle Ott, Naman Goyal, Jingfei Du, Mandar Joshi, Danqi Chen, Omer
  Levy, Mike Lewis, Luke Zettlemoyer, and Veselin Stoyanov.
\newblock Roberta: A robustly optimized bert pretraining approach.
\newblock {\em ArXiv preprint}, abs/1907.11692, 2019.

\bibitem{loshchilov2018decoupled}
Ilya Loshchilov and Frank Hutter.
\newblock Decoupled weight decay regularization.
\newblock In {\em International Conference on Learning Representations}, 2018.

\bibitem{lu2022adaptive}
Zhou Lu, Wenhan Xia, Sanjeev Arora, and Elad Hazan.
\newblock Adaptive gradient methods with local guarantees.
\newblock {\em arXiv preprint arXiv:2203.01400}, 2022.

\bibitem{mcmahan2013minimax}
Brendan McMahan and Jacob Abernethy.
\newblock Minimax optimal algorithms for unconstrained linear optimization.
\newblock In {\em Advances in Neural Information Processing Systems}, pages
  2724--2732, 2013.

\bibitem{mcmahan2012no}
Brendan Mcmahan and Matthew Streeter.
\newblock No-regret algorithms for unconstrained online convex optimization.
\newblock In {\em Advances in neural information processing systems}, pages
  2402--2410, 2012.

\bibitem{mcmahan2010adaptive}
H.~Brendan McMahan and Matthew Streeter.
\newblock Adaptive bound optimization for online convex optimization.
\newblock In {\em Proceedings of the 23rd Annual Conference on Learning Theory
  (COLT)}, pages 244--256, 2010.

\bibitem{mhammedi2020lipschitz}
Zakaria Mhammedi and Wouter~M Koolen.
\newblock Lipschitz and comparator-norm adaptivity in online learning.
\newblock {\em Conference on Learning Theory}, pages 2858--2887, 2020.

\bibitem{mhammedi2019lipschitz}
Zakaria Mhammedi, Wouter~M Koolen, and Tim Van~Erven.
\newblock Lipschitz adaptivity with multiple learning rates in online learning.
\newblock In {\em Conference on Learning Theory}, pages 2490--2511. PMLR, 2019.

\bibitem{orabona2019modern}
Francesco Orabona.
\newblock A modern introduction to online learning.
\newblock {\em arXiv preprint arXiv:1912.13213}, 2019.

\bibitem{orabona2016coin}
Francesco Orabona and D{\'a}vid P{\'a}l.
\newblock Coin betting and parameter-free online learning.
\newblock In D.~D. Lee, M.~Sugiyama, U.~V. Luxburg, I.~Guyon, and R.~Garnett,
  editors, {\em Advances in Neural Information Processing Systems 29}, pages
  577--585. Curran Associates, Inc., 2016.

\bibitem{orabona2017training}
Francesco Orabona and Tatiana Tommasi.
\newblock Training deep networks without learning rates through coin betting.
\newblock In {\em Advances in Neural Information Processing Systems 30: Annual
  Conference on Neural Information Processing Systems 2017, 4-9 December 2017,
  Long Beach, CA, {USA}}, pages 2157--2167, 2017.

\bibitem{shalev07online}
S.~Shalev-Shwartz.
\newblock {\em Online Learning: Theory, Algorithms, and Applications}.
\newblock PhD thesis, The Hebrew University of Jerusalem, 2007.

\bibitem{shalev2011online}
Shai Shalev-Shwartz.
\newblock Online learning and online convex optimization.
\newblock {\em Foundations and Trends in Machine Learning}, 4(2):107--194,
  2011.

\bibitem{dropout_JMLR:v15:srivastava14a}
Nitish Srivastava, Geoffrey Hinton, Alex Krizhevsky, Ilya Sutskever, and Ruslan
  Salakhutdinov.
\newblock Dropout: A simple way to prevent neural networks from overfitting.
\newblock {\em Journal of Machine Learning Research}, 15(56):1929--1958, 2014.

\bibitem{jft2017}
Chen Sun, Abhinav Shrivastava, Saurabh Singh, and Abhinav Gupta.
\newblock Revisiting unreasonable effectiveness of data in deep learning era.
\newblock {\em 2017 IEEE International Conference on Computer Vision (ICCV)},
  Oct 2017.

\bibitem{Wang2018GLUEAM}
Alex Wang, Amanpreet Singh, Julian Michael, Felix Hill, Omer Levy, and
  Samuel~R. Bowman.
\newblock Glue: A multi-task benchmark and analysis platform for natural
  language understanding.
\newblock {\em ArXiv}, abs/1804.07461, 2018.

\bibitem{zinkevich2003online}
Martin Zinkevich.
\newblock Online convex programming and generalized infinitesimal gradient
  ascent.
\newblock In {\em Proceedings of the 20th International Conference on Machine
  Learning (ICML-03)}, pages 928--936, 2003.

\end{thebibliography}
\bibliographystyle{plain}

\clearpage
\appendix
\section{Limitations}
In this paper, we introduce a technique for automatically learning the right scale of the learning rate called \mechanic\ and evaluate it on a broad range of practical deep learning problems and settings. We find that, depending on the problem, \mechanic\ can be quite effective and even surpass performance of manual tuning of learning rates at a fraction of the cost. We also find that in addition to training from scratch, \mechanic\ also works for finetuning.

While the initial set of results are encouraging, many challenges remain. Firstly, we found that \mechanic\ does not seem to work well with dropout \citep{dropout_JMLR:v15:srivastava14a}. While \mechanic\ is effective against noise from sampling, we believe there may be a more fundamental reason why dropout does not work well with $\mechanic$. Second, \mechanic\ re-purposes the gradient coming from the \emph{train set} for learning the learning rate, which means it optimizes for train loss. This is different from manual tuning of learning rates where researchers tune it based on performance on the \emph{validation} set. A principled way to handle this discrepancy is also in interesting avenue of future research.
\section{Additional Experimental Details}\label{sec:additionalexperiments}

\subsection{Hyperparams for BERT}
\begin{table}[H]
    \centering
    \small
            \label{tab:hparams}
    \begin{tabular}{cccccccccc}
    \toprule
        Model & Aug & Optimizer & $\beta_{1}$ & $\beta_{2}$ & lr sweep & best lr & Weight decay  \\
        \midrule

        % BERT-S & & Lion & 0.95 & 0.98 & [5e-5, 1e-4, 2e-4, 5e-4, 1e-3]  &  5e-4 & \\
        % BERT-S & & AdamW & 0.9 & 0.999 & [2e-4, 5e-4, 1e-3, 2e-3, 5e-3]  & 2e-3 & \\
        % BERT-S & & $\mathcal{M}$-AdamW & 0.95 & 0.98 &   & & 0.05 \\
        % BERT-S & & $\mathcal{M}$-AdamW & 0.9 & 0.999 &   & & 0.01 \\
        BERT-B & & AdamW & 0.9 & 0.999 & [5e-4, 1e-3, 2e-3, 5e-3, 1e-2]  & 5e-3 & \\
        BERT-L & & AdamW & 0.9 & 0.999 & [5e-4, 1e-3, 2e-3, 5e-3, 1e-2]  & 1e-3 & \\
        BERT-B/L & & $\mathcal{M}$-AdamW & 0.9 & 0.999 &   & & 0.01 \\
        \midrule
        \addlinespace[0.1cm]
        BERT-B & & Lion & 0.9 & 0.99 & [5e-5, 1e-4, 2e-4, 5e-4, 1e-3]  & 5e-4 & \\
        BERT-L & & Lion & 0.9 & 0.99 & [5e-5, 1e-4, 2e-4, 5e-4, 1e-3]  & 2e-4 & \\
        BERT-B/L & & $\mathcal{M}$-AdamW & 0.9 & 0.99 &   & & 0.1 \\
        % BERT-L & & AdamW & 0.9 & 0.999 & [1e-4, 2e-4, 5e-4, 1e-3, 2e-3]  &  & \\
        \bottomrule
        \addlinespace[0.3cm]
    \end{tabular}
        \caption{Critical hyperparameters we used for BERT pre-training. For the baselines we grid searched the learning rate as shown in the table. Batch size 2k, trained for 150k steps on original Wikibooks dataset w/o NSP loss (similar to Roberta). We found that a small amount of weight decay makes \mechanic\ slightly more effective.}
\end{table}

\begin{table}[H]
    \centering
    \small
            \label{tab:hparams}
    \begin{tabular}{cccc}
    \toprule
        Hyperparam & Optimizer & Without \mechanic\ pre-training & With \mechanic\ pre-training  \\
        \midrule
        Learning Rate & Adam & [5e-5, 1e-4, 2e-4, 3e-4, 5e-4] & [1e-5, 2e-5, 3e-5, 5e-5, 1e-4] \\
        Learning Rate & Lion & [5e-6, 1e-5, 2e-5, 3e-5, 5e-5] & [1e-6, 2e-6, 3e-6, 5e-6, 1e-5] \\
        Batch Size & & [16, 32] & [16, 32] \\
        Weight Decay & & 0 & 0 \\
        Max Epochs & & 10 & 10 \\
        Learning Rate Decay & & Linear & Linear \\
        Warmup Ratio & & 0.06 & 0.06 \\
        Dropout & & 0.1 & 0.1 \\
        Attention Dropout & & 0.1 & 0.1 \\
        \bottomrule
        \addlinespace[0.3cm]
    \end{tabular}
        \caption{BERT GLUE finetuning hparams with AdamW.}
\end{table}

\begin{table}[H]
    \centering
    \small
            \label{tab:hparams}
    \begin{tabular}{cc}
    \toprule
        Hyperparam & Value  \\
        \midrule
        Batch Size  & [16, 32] \\
        Weight Decay & 0 \\
        Max Epochs  & 10 \\
        Learning Rate Decay  & Linear \\
        Warmup Ratio  & 0.06 \\
        Dropout & 0.0 \\
        Attention Dropout & 0.1 \\
        \bottomrule
        \addlinespace[0.3cm]
    \end{tabular}
        \caption{BERT GLUE finetuning hparams when using mechanic at finetuning time. We found a limitations of \mechanic\ that it does not perform well in combination to dropout, so we set dropout rate to 0 for these experiments.}
\end{table}

\subsection{Hyperparams for Image Classification}
\begin{table}[H]
    \centering
    \small
            \label{tab:hparams}
    \begin{tabular}{ccccccccc}
    \toprule
        Model & Aug & Optimizer & $\beta_{1}$ & $\beta_{2}$ & lr & Weight decay & Num epochs \\
        \midrule

        % BERT-S & & Lion & 0.95 & 0.98 & [5e-5, 1e-4, 2e-4, 5e-4, 1e-3]  &  5e-4 & \\
        % BERT-S & & AdamW & 0.9 & 0.999 & [2e-4, 5e-4, 1e-3, 2e-3, 5e-3]  & 2e-3 & \\
        % BERT-S & & $\mathcal{M}$-AdamW & 0.95 & 0.98 &   & & 0.05 \\
        % BERT-S & & $\mathcal{M}$-AdamW & 0.9 & 0.999 &   & & 0.01 \\
        ViT-B/16 & & AdamW & 0.9 & 0.999 & 8e-4 & 0.1 & 7 \\
        ViT-B/16 & & $\mathcal{M}$-AdamW & 0.9 & 0.999 &   & 0.1  & 7\\
        ViT-L/16 & & AdamW & 0.9 & 0.999 & 4e-4 & 0.1 & 7 \\
        ViT-L/16 & & $\mathcal{M}$-AdamW & 0.9 & 0.999 &   & 0.1  & 7\\
        \midrule
        ViT-B/16 & & Lion & 0.9 & 0.99 & 1e-4 & 0.3 & 7\\
        ViT-B/16 & & $\mathcal{M}$-AdamW & 0.9 & 0.99 &  & 0.3 & 7\\
        ViT-L/16 & & Lion & 0.9 & 0.99 & 1e-4 & 0.3 & 7\\
        ViT-L/16 & & $\mathcal{M}$-AdamW & 0.9 & 0.99 &  & 0.3 & 7\\
        % BERT-L & & AdamW & 0.9 & 0.999 & [1e-4, 2e-4, 5e-4, 1e-3, 2e-3]  &  & \\
        \bottomrule
        \addlinespace[0.3cm]
    \end{tabular}
        \caption{Critical hyperparameters we used for all the experiments, most of them directly repurposed from \cite{dosovitskiy2021an_vit}. For each baseline we repurposed a well-tuned base learning from previous work \citep{dosovitskiy2021an_vit, lion_chen2023symbolic}. Trained on JFT-300M with batch size 4k with LR cosine decay schedule.}
\end{table}

\begin{table}[H]
    \centering
    \small
            \label{tab:vit_hparams}
    \begin{tabular}{cccc}
    \toprule
        Hyperparam & ImageNet & CIfar100 & Cifar10  \\
        \midrule
        Learning rate sweep & \{0.003, 0.01, 00.03, 0.06\} &  \{0.001, 0.003, 0.01, 00.03\} & \{0.001, 0.003, 0.01, 00.03\}\\
        Batch size  & 512 & 512 & 512 \\
        Weight decay & 0 & 0 & 0\\
        Num steps  & 20k & 10k & 10k \\
        Warmup steps  & 500 & 500 & 500 \\
        Learning rate decay  & Cosine & Cosine & Cosine \\
        Dropout & 0.0 & 0.0 & 0.0 \\
        Clipping norm & 1.0 & 1.0 & 1.0 \\
        \bottomrule
        \addlinespace[0.3cm]
    \end{tabular}
        \caption{We directly use ViT finetuning hyperparams recommended by \cite{dosovitskiy2021an_vit}. For \mechanic\, we also use same hyperparameters, omitting just the learning rate sweep, since we don't need it now. We use finetuning resolution of 384.}
\end{table}

\begin{table}[H]
    \centering
    \small
            \label{tab:cifarhparams}
    \begin{tabular}{cc}
    \toprule
    Hyperparam& Value\\
    \midrule
        weight decay &$[0.001, 0.0005, 0.0001]$ \\
        lr & $[0.3, 0.1, 0.03]$\\
        SGD Momentum $\beta$ & $0.9$\\
        batch size & $128$\\
        num epochs& $200$\\
        schedule& Step decay by $0.2$ at 60, 120, 160 epochs\\
        augmentations& Random Crop, Random Horizontal Flip\\
        Gradient clip by global norm&  $1.0$\\
        $\lambda$& Kept at default 0.01\\
        \bottomrule
        \addlinespace[0.3cm]
    \end{tabular}
        \caption{Hyperparameters for tuning ResNet18 on CIFAR10 and WideResNet on CIFAR100}
\end{table}

\begin{figure}
\centering
\includegraphics[width=0.45\textwidth]{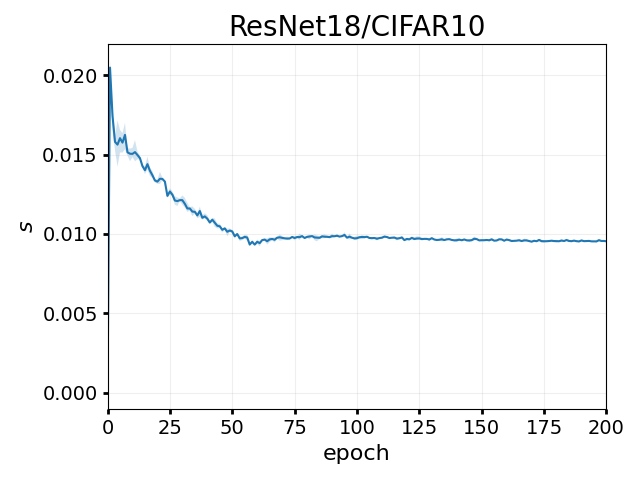}
\includegraphics[width=0.45\textwidth]{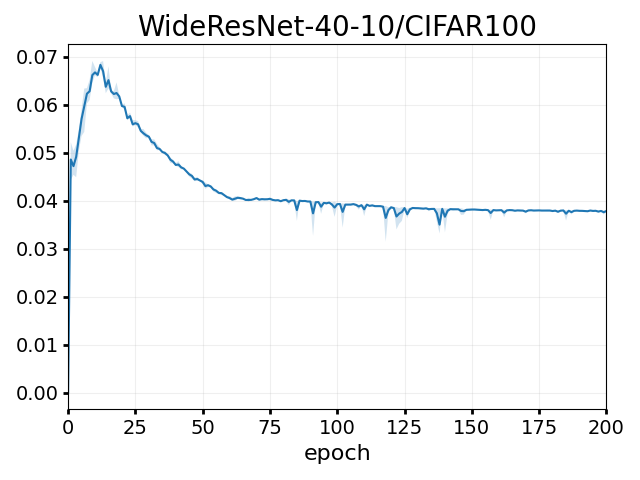}
\caption{Scaling values $s$ learned by $\mechanic$ during ResNet18 training on CIFAR10 and WideResNet training on CIFAR100. Shaded area represents max/min value over 3 runs. Dark line is average.}
\end{figure}

\subsection{Hyperparams and additional results on comparisons with D-adaptation}
\label{subsec:dadapt}

\begin{figure}
\centering
\includegraphics[width=0.9\textwidth]{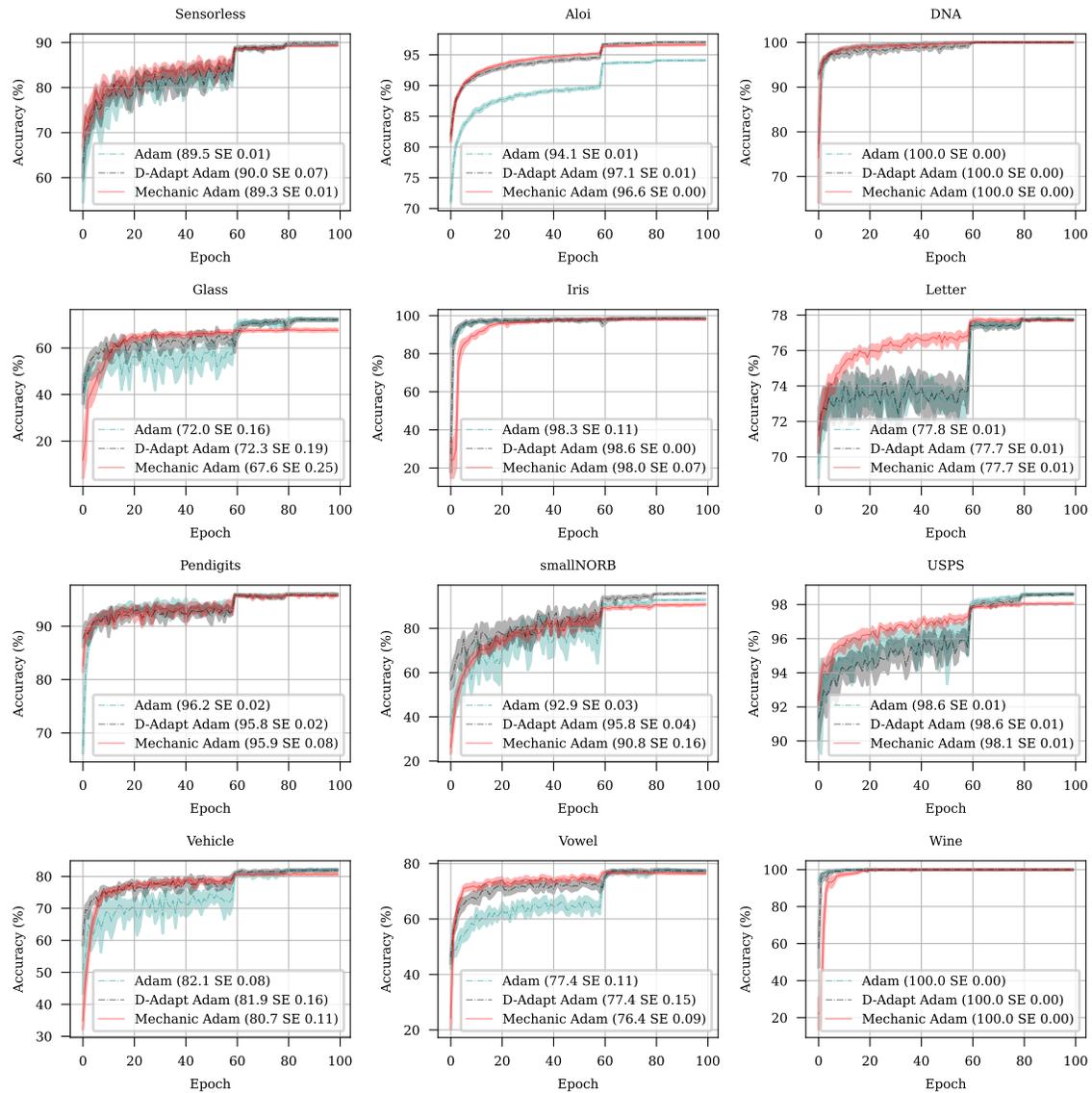}
\caption{Comparing \mechanic\ with D-adaptation and manually tuned learning rates on a suite of convex tasks.}
\end{figure}

\begin{figure}
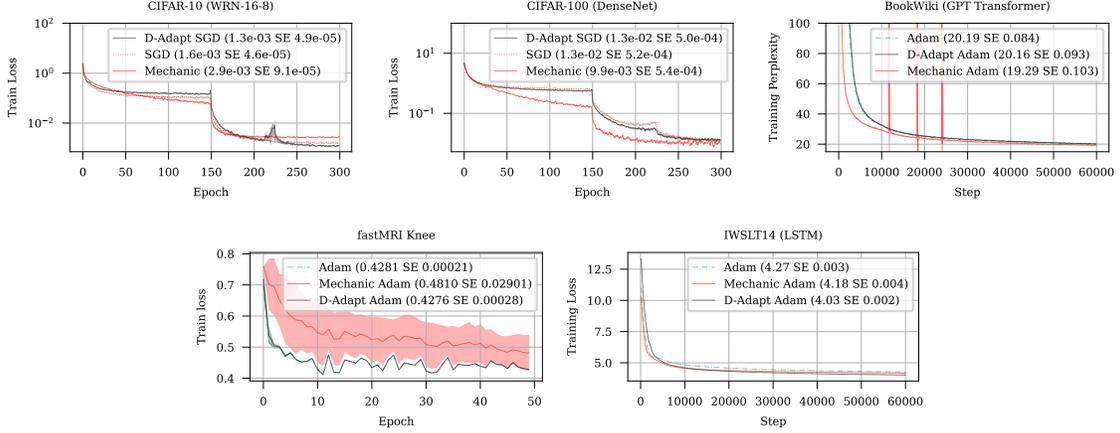

\centering
\includegraphics[width=0.3\textwidth]{fig/mechanic_cifar_train.pdf}
\includegraphics[width=0.3\textwidth]{fig/mechanic_cifar100_train.pdf}
\includegraphics[width=0.3\textwidth]{fig/mechanic_gpt_train.pdf}
\includegraphics[width=0.3\textwidth]{fig/mechanic_fastmri_train.pdf}
\includegraphics[width=0.3\textwidth]{fig/mechanic_lstm_train.pdf}
\caption{Complementary train set results of \mechanic\ with D-adaptation and manually tuned learning rates on vision and language tasks.}
\end{figure}

\section{Theoretical Analysis}\label{sec:prooftuner}
Here we provide the theoretically tractable version of $\tuner$ as well as its analysis.
\begin{algorithm}[t]
   \caption{\tuner (theoretically tractable version}
   \label{alg:tuner}
  \begin{algorithmic}[1]
      \STATE{\bfseries Input: } $\beta\in [0,1]$,  $W_0$: initial ``wealth''.
      \STATE $v_0\gets 0\in \R$, $r_0\gets 0 \in \R$, $\bq_t\gets 0\in \R$, $m_0\gets 0\in\R$, $\bs_1\gets 0 \in\R$
      \FOR{$t=1\dots T$}
      \STATE Output $\bs_t$.
      \STATE Receive $\bh_t$.
      \STATE $m_t \gets\max(\beta m_{t-1}, \bh_t)$
      \STATE {\color{red} $\hat \bh_t = \text{clip}(\bh_t, -m_{t-1}, m_{t-1})$ // Red text are steps that we omit in practice (see Algorithm~\ref{alg:mechanicmultibeta})}
      \STATE {\color{red} $\bq_t \gets \beta\bq_{t-1} + \hat\bh_t$}
      \STATE $r_{t} \gets \beta r_{t-1} - \bs_{t-1}\hat\bh_t$
      \STATE $W_t\gets {\color{blue} W_0} +r_t$ {\color{blue}//  Blue text is changed in practice (see Algorithm~\ref{alg:mechanicmultibeta})}
      \STATE $v_{t} \gets \beta^2 v_{t-1} +  \hat \bh_t^2$
      \STATE $\bs_{t+1} \gets \frac{W_t}{\sqrt{{\color{red} 4m_t^2+}v_{t}} + \epsilon}\cdot {\color{red} \text{clip}(\bq_t/\sqrt{4m_t^2+v_t},0,1)}$ 
      \ENDFOR
   \end{algorithmic}
\end{algorithm}

\subsection{Algorithmic Simplifications}
To simplify the implementation of $\mechanic$, we replaced all of the red text in Algorithm~\ref{alg:tuner} with a single line $r_t\gets \max(0,r_t)$ right after the definition of $r_t$. This is motivated by two ideas.

First, we conjecture that the clip operation using $\bq_t/\sqrt{v_t}$ may even be unnecessary in theory\footnote{Removing the clip in theory may requiring some additional non-worst-case assumption.}: we observed no change from removing this operation in practice, and observe that the update has an intuitive interpretation via the Taylor expansion discussed in Section~\ref{sec:algo}. 

Second, the clip operation on $\bh_t$ using $m_{t-1}$ is essentially designed to prevent the wealth $W_t$ from becoming negative or zero using the gradient truncation technique employed by \cite{cutkosky2019artificial, mhammedi2019lipschitz,mhammedi2020lipschitz}. While less consistent with known theory, we found it simpler to ensure the wealth $W_t$ does become negative simply by clipping $r_t$ directly (we did not clip $W_t$ to be nonnegative as $W_t=0$ would cause the algorithm to output $\bs_t=0$ for all future iterations). We found these changes simplified the algorithm while having no noticeable effect on the performance. Although these deviations technically do not come with guarantees, the accomplish similar intuitive goals and so we expected (and observed) that they simplified the implementation while not damaging performance.

\subsection{Eliminating $W_0$ in favor of $\bs_{init}$}
While $\tuner$ makes use of the ``initial wealth'' value $W_0$, $\mechanic$ instead adopts a varying value for $W_0$ proportional to $\bs_{init}\cdot m_t$. This makes the first $\bs$ value proposed by $\mechanic$ equal to $\bs_{init}$, which is more intuitive to set than $W_0$. The exponential growth in $\bs$ allows us to set $\bs_{init}$ to a very small value of $10^{-8}$. It also  makes the values for $\bs$ ``scale-free'' in the sense that rescaling the updates $\bu_t$ by any constant will have no effect on the resulting $\bs_t$.

\subsection{Regret Bound}

\begin{restatable}{Theorem}{thmtuner}\label{thm:tuner}
 With $\beta=1$, Algorithm~\ref{alg:tuner} guarantees for all $\circs\ge 0$:
\begin{align*}
    \sum_{t=1}^T \bh_t(\bs_t - \circs)&\le W_0+(\circs + \max_t  \bs_t)\cdot m_T + O\left(\circs\cdot \log(T \circs m_T/m_1W_0)\sqrt{\sum_{t=1}^T\bh_t^2}\right).
\end{align*}
\end{restatable}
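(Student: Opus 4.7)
The plan is to apply the standard ``reward-to-regret'' reduction for coin-betting style parameter-free algorithms, with careful treatment of the clipping operations in Algorithm~\ref{alg:tuner}. First, I would decompose:
\begin{align*}
\sum_{t=1}^T \bh_t(\bs_t - \circs) \;=\; \sum_{t=1}^T \hat\bh_t(\bs_t - \circs) \;+\; \sum_{t=1}^T(\bh_t - \hat\bh_t)(\bs_t - \circs).
\end{align*}
The clipping excess $\bh_t - \hat\bh_t$ is nonzero only when $|\bh_t| > m_{t-1}$, in which case $|\bh_t - \hat\bh_t| = m_t - m_{t-1}$, so the sum telescopes to $\sum_t |\bh_t - \hat\bh_t| \le m_T$. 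This contributes exactly the $(\circs + \max_t \bs_t)\cdot m_T$ term in the theorem.

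Next, I would unfold the wealth recursion (with $\beta=1$) to get the identity $W_T = W_0 - \sum_{t=1}^T \bs_{t-1}\hat\bh_t$. Combined with $\bs_1 = 0$ (and absorbing a one-step boundary into the $\max_t \bs_t$ term), this yields the key reformulation
\begin{align*}
\sum_{t=1}^T \hat\bh_t(\bs_t - \circs) \;\le\; (W_0 - W_T) \;-\; \circs\sum_{t=1}^T \hat\bh_t.
\end{align*}
The task then reduces to establishing a convex lower bound $W_T \ge \Phi_T(\bq_T)$ on the wealth as a function of the cumulative clipped gradient $\bq_T = \sum_t \hat\bh_t$. Given such a bound, Fenchel duality gives
\begin{align*}
(W_0 - W_T) - \circs\bq_T \;\le\; W_0 + \sup_q\bigl(-\Phi_T(q) - \circs q\bigr) \;=\; W_0 + \Phi_T^\ast(-\circs),
\end{align*}
and for an AdaGrad-style potential $\Phi_T(q) \propto W_0\cdot \exp\bigl(\Theta(q^2/(m_T^2 + v_T))\bigr)$ the conjugate evaluates to $O\bigl(\circs\sqrt{m_T^2 + v_T}\cdot \log(\circs\sqrt{v_T}/W_0)\bigr)$, which reproduces the remaining term in the bound after using $\sqrt{m_T^2 + v_T} \le O(\sqrt{\sum_t \bh_t^2})$.

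Establishing the wealth potential bound is the heart of the argument and proceeds by induction on $t$. The denominator $\sqrt{4m_{t-1}^2 + v_{t-1}}$ in the update is engineered so that the effective betting fraction $\alpha_t = \bs_t/W_{t-1}$ satisfies $|\alpha_t \hat\bh_t| \le m_{t-1}/\sqrt{4m_{t-1}^2 + v_{t-1}} \le 1/2$, which lets one apply the quadratic log-inequality $-\log(1-x) \le x + x^2$ for $|x|\le 1/2$ to the multiplicative update $W_t = W_{t-1}(1 - \alpha_t \hat\bh_t)$. Summing in $\log W_t$ and plugging in the ONS-style choice $\alpha_t = \text{clip}(\bq_{t-1}/\sqrt{4m_{t-1}^2+v_{t-1}},0,1)/\sqrt{4m_{t-1}^2+v_{t-1}}$ then yields the claimed potential after a short Taylor calculation.

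The main obstacle will be cleanly handling the $\text{clip}(\cdot,0,1)$ factor in the inductive step: when the clip binds at $0$ (so $\bq_{t-1}<0$ and the algorithm refuses to bet), the wealth stays flat while $\bq_{t}$ may continue to drift, so one must argue separately that the potential bound is still valid because $\circs \ge 0$ only demands a one-sided wealth guarantee; when the clip saturates at $1$, the wealth has already grown enough that the target bound holds essentially trivially, though one must bound the rate at which this saturation sets in. A secondary subtlety is matching the theoretical $W_0$ of Algorithm~\ref{alg:tuner} with the time-varying $W_0 \leadsto \bs_{init}\cdot m_t/n$ used in Algorithm~\ref{alg:mechanicmultibeta}; this is handled by effectively setting $W_0 = \bs_{init}\cdot m_1$ at initialization, which is what produces the $\log(m_T/m_1 W_0)$ dependence inside the logarithm, while the $\log T$ factor comes from bounding the effective dynamic range of $\circs$ when evaluating the Fenchel conjugate.
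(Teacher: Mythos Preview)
Your overall plan matches the paper's: split off the clipping excess (your telescoping to $m_T(\circs + \max_t \bs_t)$ is exactly what the paper does, citing \cite{cutkosky2019artificial}), lower-bound the wealth as a function of $\bq_T = \sum_t \hat \bh_t$, and convert via Fenchel duality. Two points deserve sharpening.

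\textbf{The potential.} You propose a purely Gaussian potential $\Phi_T(q) \propto W_0\exp\bigl(\Theta(q^2/(m_T^2+v_T))\bigr)$ and plan to handle the clip at $0$ and at $1$ by separate casework in the induction. The paper instead bakes the clip into the potential by defining
\[
a(x)=\begin{cases}0 & x\le 0,\\ x^2/2 & 0\le x\le 1,\\ x-\tfrac12 & x\ge 1,\end{cases}
\]
so that $a'(x)=\text{clip}(x,0,1)$ exactly matches the algorithm, and then proves one key pointwise inequality (established by five cases) which makes the induction go through for $W_t \ge C_t\exp\bigl[a(-\bq_t/\sqrt{4m_t^2+v_t})\bigr]$. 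Your sketch is in the right spirit, but a pure $q^2$ exponent cannot be maintained once the clip saturates at $1$; carrying out your casework carefully would force you to reconstruct precisely this piecewise $a$.

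\textbf{Where the logarithms come from.} Neither the $\log T$ nor the $\log(m_T/m_1)$ factor arises from ``the effective dynamic range of $\circs$'' in the conjugate, nor from any identification $W_0=\bs_{init}\cdot m_1$. The theorem concerns Algorithm~\ref{alg:tuner} with a fixed constant $W_0$; the $\bs_{init}$ device belongs to Algorithm~\ref{alg:mechanicmultibeta} and is irrelevant here. In the paper both logarithmic factors come from the accumulated quadratic error in the $\log(1-x)\ge -x-x^2$ step:
\[
\sum_{t=1}^T \frac{2\hat \bh_t^2}{4m_{t-1}^2+v_{t-1}}\;\le\; \sum_{t=1}^T 2\bigl(\log v_t-\log v_{t-1}\bigr)\;=\;O\bigl(\log(Tm_T/m_1)\bigr),
\]
which becomes the prefactor $W_0 m_1/(T^2 m_T)$ in the wealth lower bound and only then surfaces inside the logarithm when the Fenchel conjugate is evaluated.
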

\begin{proof} 

First, we employ an argument developed by \cite{cutkosky2019artificial}:
\begin{align*}
\sum_{t=1}^T \bh_t(\bs_t - \circs)&\le \sum_{t=1}^T \hat \bh_t(\bs_t - \circs) +\sum_{t=1}^T |\hat \bh_t - \bh_t|(|\bs_t| + |\circs|)\\
&\le \sum_{t=1}^T \hat \bh_t(\bs_t - \circs) +(\max_t |s_t| + |\circs|)\sum_{t=1}^T |\hat \bh_t - \bh_t|\\
&=\sum_{t=1}^T \hat \bh_t(\bs_t - \circs) +m_T(\max_t |s_t| + |\circs|).
\end{align*}

So, in the following, it suffices to bound $\sum_{t=1}^T \hat \bh_t(\bs_t - \circs)$. This is helpful  because we will be able to  use the  bound $|\hat \bh_t|\le m_{t-1}$, and $m_{t-1}$ is  known \emph{before} $\hat  \bh_t$ is revealed.

% {\color{red} This is very tedious algebra and will definitely be relegated to an appendix}
As is typical in the analysis of parameter-free algorithms, the proof proceeds by lower-bounding the wealth. Define a function $a(x)$ piecewise by:
\begin{align*}
    a(x)&=\left\{\begin{array}{lr}
    0&x\le 0\\
    x^2/2&x\in[0,1]\\
    x-1/2& x\ge 1
    \end{array}\right.
\end{align*}
Notice that $a(x)$ is differentiable, monotonically increasing and 1-Lipschitz. We are going to roughly show that $W_t \ge \Omega(\exp(a(-\sum_{k=1}^t\hat\bh_k/\sqrt{v_t})))$, after which the regret bound will follow from the wealth-regret duality \cite{orabona2016coin}.

The key technical inequality in the proof is the following: for any $A$, $B$, $m$ with $B\ge 4m^2$, and any $|x|\le m$, we have:
\begin{align}
    a\left(\frac{-A}{\sqrt{B}}\right) - \frac{x}{\sqrt{B}}\text{clip}\left(\frac{-A}{\sqrt{B}},0,1\right) &\ge a\left(\frac{-A-x}{\sqrt{B+x^2}}\right) - \frac{x^2}{B}.\label{eqn:technicalinequality}
\end{align}
Once (\ref{eqn:technicalinequality}) is established, we proceed as follows: defining $c_t = \frac{\text{clip}(\sum_{k=1}^{t-1} \hat\bh_k/\sqrt{4m_{t-1}^2+v_{t-1}},0,1)}{\sqrt{v_t}}$, we have:
\begin{align*}
    \log(W_t)  &= \log(W_{t-1}) +\log(1-\hat\bh_t c_t)\\
    &\ge \log(W_{t-1})-\hat\bh_t c_t - \hat\bh_t^2 c_t^2,
\end{align*}
where we have used $c_t\le 1/2$ and the identity $\log(1-x)\ge -x-x^2$ for $x\le 1/2$ (which applies since $\hat\bh_t  \le m_{t-1}$ by definition). Now, set $A=\sum_{k=1}^{t-1} \hat\bh_k$ and $B=4m_{t-1}^2+v_{t-1}$ and $x=\hat\bh_t$ in (\ref{eqn:technicalinequality}), we see that:
\begin{align*}
    \log(W_t)-\log(W_{t-1})&\ge   - \frac{x}{\sqrt{B}}\text{clip}\left(\frac{-A}{\sqrt{B}},0,1\right) - \frac{\hat\bh_t^2}{4m_{t-1}^2 +\bv_{t-1}}\\
    &\ge a\left(\frac{-A-x}{\sqrt{B+x^2}}\right) -a\left(\frac{-A}{\sqrt{B}}\right) - \frac{x^2}{B}- \frac{\hat\bh_t^2}{4m_{t-1}^2+\bv_{t-1}}\\
    &\ge a\left(\frac{-\sum_{k=1}^t \hat\bh_k}{\sqrt{4m_{t-1}^2+v_t}}\right) -a\left(\frac{-\sum_{k=1}^{t-1}\hat\bh_k}{\sqrt{4m_{t-1}^2+v_{t-1}}}\right) - \frac{2\hat\bh_t^2}{4m_{t-1}^2+v_{t-1}}\\
    &\ge a\left(\frac{-\sum_{k=1}^t \hat\bh_k}{\sqrt{4m_{t}^2+v_t}}\right) -a\left(\frac{-\sum_{k=1}^{t-1}\hat\bh_k}{\sqrt{4m_{t-1}^2+v_{t-1}}}\right) - \frac{2\hat\bh_t^2}{4m_{t-1}^2+v_{t-1}}.
\end{align*}
Thus by telescoping the sum, we have:
\begin{align*}
    \log(W_T)&\ge \log(W_0) +a\left(\frac{-\sum_{k=1}^T \hat\bh_k}{\sqrt{v_T}}\right) - \sum_{t=1}^T\frac{2\hat\bh_t^2}{4m_{t-1}^2+v_{t-1}}.
\end{align*}
Now, observe that $\frac{2\hat\bh_t^2}{4m_{t-1}^2+v_{t-1}}\le\frac{2\hat\bh_t^2}{v_{t}}\le  2(\log(v_t)-\log(v_{t-1}))$, so we have $\sum_{t=1}^T\frac{\hat\bh_t^2}{v_{t-1}}\le 2\log(Tm_T/m_1)$ so that overall:
\begin{align*}
    W_T &\ge \frac{W_0m_1}{T^2m_{T}} \exp\left[a\left(\frac{-\sum_{k=1}^T \hat\bh_k}{\sqrt{v_T}}\right)\right].
\end{align*}
Thus, if we define $p(H) = \frac{W_0 m_1}{T^2m_T} \exp\left[a\left(\frac{H}{\sqrt{v_T}}\right)\right]$, we have $W_T\ge p(-\sum_{k=1}^T \hat\bh_k)$. Now, we employ the reward-regret duality:
\begin{align*}
    \sum_{t=1}^T \hat\bh_t(\bs_t-\circs) &= \bs_{init}\cdot m +\circs\sum_{t=1}^T (-\hat\bh_t) - W_T\\
    &\le W_0 +\sup_{G}\circs\cdot G - p(G)\\
    &=W_0 +p^\star(\circs)\\
    &\le W_0  + O(\bs \log(\bs T/\bs_{init})\sqrt{v_T}).
\end{align*}
Where $p^\star$ is the Fenchel conjugate of $p$ and the evaluation of the conjugate follows from direct calculation (see, e.g. \cite{orabona2016coin, cutkosky2018black, kempka2019adaptive}).

Thus, to prove the theorem we need only show (\ref{eqn:technicalinequality}).
This is established via casework in a manner similar to \cite{kempka2019adaptive}. 
% (TODO: break this out into a separate Lemma).

\textbf{Case 1. $\frac{-A}{\sqrt{B}}\le 0$:} In this case, the statement is equivalent to: $\frac{x^2}{B}\ge a\left(\frac{-A-x}{\sqrt{B+x^2}}\right)$. Note that since $\frac{-A}{\sqrt{B}}\le 0$, we have $A\ge 0$. Therefore:
\begin{align*}
    \frac{-A-x}{\sqrt{B+x^2}} &=\frac{-A}{\sqrt{B+x^2}}-\frac{x}{\sqrt{B+x^2}}\\
    &\le -\frac{x}{\sqrt{B+x^2}}.
\end{align*}
Further, we clearly have $-\frac{x}{\sqrt{B+x^2}}\le 1$ so that:
\begin{align*}
    a\left(\frac{-A-x}{\sqrt{B+x^2}}\right)\le a\left(-\frac{x}{\sqrt{B+x^2}}\right)=\frac{x^2}{2(B+x^2)}\le \frac{x^2}{B}.
\end{align*}

So, in the following we assume $\frac{-A}{\sqrt{B}}\ge 0$.

\textbf{Case 2. $\frac{-A-x}{\sqrt{B+x^2}}\le 0$:} In this case, it suffices to show $\frac{-x}{\sqrt{B}}\text{clip}\left(\frac{-A}{\sqrt{B}},0,1\right) \ge -\frac{x^2}{B}$. The case assumption implies $m^2\ge x\ge -A\ge 0$. Therefore, since $B\ge 4m^2$, $\text{clip}\left(\frac{-A}{\sqrt{B}},0,1\right)=\frac{-A}{\sqrt{B}}$ so that $\frac{-x}{\sqrt{B}}\text{clip}\left(\frac{-A}{\sqrt{B}},0,1\right)=\frac{xA}{B}\ge \frac{-x^2}{B}$ as desired.

So, in the following we now further assume $\frac{-A-x}{\sqrt{B+x^2}}\ge 0$.

\textbf{Case 3. $\frac{-A}{\sqrt{B}}\in [0,1]$:} We have $a\left(\frac{-A}{\sqrt{B}}\right) = \frac{A^2}{2B}$, and also since $a(z)\le \frac{1}{2}z^2$ for all $z$, $a\left(\frac{-A-x}{\sqrt{B+x^2}}\right)\le \frac{(A+x)^2}{2(B+x^2)}$. Thus, it suffices to show that $\frac{A^2}{2B} + \frac{xA}{B} \ge \frac{(A+x)^2}{2(B+x^2)} - \frac{x^2}{B}$, but this is equivalent to $\frac{(A+x)^2}{2B} \ge \frac{(A+x)^2}{2(B+x^2)}-\frac{x^2}{2B}$, which clearly holds.

\textbf{Case 4: $\frac{-A}{\sqrt{B}}\ge 1$ and $\frac{-A-x}{\sqrt{B+x^2}}\ge 1$:} In this case it suffices to show $\frac{-A}{\sqrt{B}} - \frac{x}{\sqrt{B}} \ge \frac{-A-x}{\sqrt{B+x^2}}-\frac{x^2}{B}$. To see this, we have:

\begin{align*}
    \frac{-A-x}{\sqrt{B+x^2}} &= \frac{-A}{\sqrt{B+x^2}} - \frac{x}{\sqrt{B+x^2}}\\
    &\le \frac{-A}{\sqrt{B}} - \frac{x}{\sqrt{B}} +x\left(\frac{1}{\sqrt{B}} - \frac{1}{\sqrt{B+x^2}}\right)\\
    &\le \frac{-A}{\sqrt{B}} - \frac{x}{\sqrt{B}} + \frac{2x^3}{3B^{3/2}}\\
    &\le \frac{-A}{\sqrt{B}} - \frac{x}{\sqrt{B}} +\frac{x^2}{B},
\end{align*}

where in the second-to-last line we have used the fact that $h\mapsto \frac{1}{\sqrt{B+h}}$ is a convex in $h$, and in the last line we have used $\sqrt{B}\ge m\ge x$.

\textbf{Case 5: $\frac{-A}{\sqrt{B}}\ge 1$ and $\frac{-A-x}{\sqrt{B+x^2}}\in[0,1)$:} In this case we need to show $\frac{-A}{\sqrt{B}} - \frac{1}{2} \ge \frac{(A+x)^2}{2(B+x^2)}-\frac{x^2}{B} +  \frac{x}{\sqrt{B}}$. To see this, we first observe that since $\frac{-A-x}{\sqrt{B+x^2}}\in[0,1)$, we have
\begin{align*}
A^2+2Ax+x^2 &\le B+x^2\\
A^2+2Ax&\le B.
\end{align*}
Thus, by quadratic formula, $A\ge -x-\sqrt{x^2 + B}$, so that we have $A\in[-x-\sqrt{x^2 + B}, -\sqrt{B}]$.

Next, our target identity can be rearranged into an equivalent form as follows:
\begin{align*}
\frac{-A}{\sqrt{B}} - \frac{1}{2} &\ge \frac{(A+x)^2}{2(B+x^2)}-\frac{x^2}{B} +  \frac{x}{\sqrt{B}}\\
0&\ge \frac{(A+x)^2}{2(B+x^2)} +\frac{A}{\sqrt{B}}+\frac{1}{2} -\frac{x^2}{B} + \frac{x}{\sqrt{B}},
\end{align*}
so that it suffices to show the second line above. Notice that the RHS of this expression is convex in $A$ and  so is maximized at the boundary of the range $[-x-\sqrt{x^2 + B}, -\sqrt{B}]$. When $A=-\sqrt{B}$ we have:
\begin{align*}
    \frac{(A+x)^2}{2(B+x^2)} +\frac{A}{\sqrt{B}}+\frac{1}{2} -\frac{x^2}{B} + \frac{x}{\sqrt{B}}&\le \frac{(A+x)^2}{2B} +\frac{A}{\sqrt{B}}+\frac{1}{2} -\frac{x^2}{B} + \frac{x}{\sqrt{B}}=-\frac{x^2}{2B}\le 0.
\end{align*}
Alternatively, when $A= -x -\sqrt{x^2+B}$, we have
\begin{align*}
    \frac{(A+x)^2}{2(B+x^2)} +\frac{A}{\sqrt{B}}+\frac{1}{2} -\frac{x^2}{B} + \frac{x}{\sqrt{B}}&= 1-\frac{\sqrt{x^2+B}}{\sqrt{B}} -\frac{x^2}{B} \\
    &\le 0.
\end{align*}

This establishes the claimed inequality (\ref{eqn:technicalinequality}) and completes the proof.

\end{proof}
\end{document}